\documentclass{article}

\PassOptionsToPackage{numbers, compress}{natbib}
\usepackage[preprint]{neurips_2026}

\usepackage[utf8]{inputenc}
\usepackage[T1]{fontenc}
\usepackage{url}
\usepackage{booktabs}
\usepackage{amsfonts}
\usepackage{amsmath}
\usepackage{amssymb}
\usepackage{amsthm}
\usepackage{mathtools}
\usepackage{nicefrac}
\usepackage{microtype}
\usepackage{bm}

\usepackage[table]{xcolor}
\definecolor{mydarkblue}{rgb}{0,0.08,0.45}
\definecolor{lightblue}{rgb}{0.25,0.25,0.8}
\definecolor{hlgreen}{rgb}{0.85,0.95,0.85}
\usepackage[colorlinks,
  linkcolor=lightblue,
  citecolor=lightblue,
  urlcolor=mydarkblue]{hyperref}
\usepackage[capitalize]{cleveref}

\usepackage{graphicx}
\usepackage{subcaption}
\usepackage{algorithm}
\usepackage{algorithmic}
\usepackage{multirow}
\usepackage{colortbl}
\usepackage{diagbox}
\usepackage{enumitem}
\usepackage{pifont}
\usepackage{placeins}
\usepackage{capt-of}
\usepackage{wrapfig}
\usepackage[most]{tcolorbox}

\usepackage{titlesec}
\titlespacing*{\paragraph}{\parindent}{0.3ex plus 0.1ex}{0.8ex}
\newcommand{\obsbox}[1]{%
\begin{tcolorbox}[colframe=black!70, colback=gray!8,
  boxrule=0.8pt, arc=2mm, left=5pt, right=5pt,
  top=3pt, bottom=3pt]
\small #1
\end{tcolorbox}}

\newtheorem{theorem}{Theorem}
\newtheorem{proposition}[theorem]{Proposition}
\newtheorem{corollary}[theorem]{Corollary}
\newtheorem{assumption}[theorem]{Assumption}
\theoremstyle{definition}

\theoremstyle{remark}

\usepackage{xspace}
\newcommand{\RQ}{\textsc{RateQuant}\xspace}
\newcommand{\TQ}{\textsc{TurboQuant}\xspace}
\newcommand{\E}{\mathbb{E}}
\newcommand{\R}{\mathbb{R}}
\newcommand{\bmin}{b_{\min}}
\newcommand{\bmax}{b_{\max}}

\DeclareMathOperator{\tr}{tr}
\DeclareMathOperator*{\argmax}{arg\,max}

\title{\texorpdfstring{\RQ}{RateQuant}: Optimal Mixed-Precision\\KV Cache Quantization via Rate-Distortion Theory}

\author{%
  Fei Zuo$^{1,*}$ \quad
  Zikang Zhou$^{2,*}$ \quad
  Hao Cong$^{3,*}$ \quad
  Xiaoyan Xi$^{1,*}$ \quad
  Ho Fai Leung$^{1,\dagger}$ \\[4pt]
  $^1$BA TechWorks (BMW Group) \quad
  $^2$National University of Singapore \quad
  $^3$Tsinghua University
}

\let\FloatBarrier\relax
\begin{document}

\maketitle
\let\thefootnote\relax\footnotetext{$^*$Equal contribution. \; $^\dagger$Corresponding author.}

\begin{abstract}
KV cache quantization reduces the memory footprint of large language model inference, yet existing quantizers assign uniform bit-widths to every attention head, overlooking significant variation in head importance. A natural idea is to allocate more bits to important heads and fewer to the rest. However, we observe that such mixed-precision allocation has a hidden pitfall: each quantizer follows a different distortion curve $D(b)=\alpha\beta^{-b}$, where the decay rate $\beta$ varies from 3.6 to 5.3 across designs. Applying one quantizer's distortion model to another inverts the allocation order and makes performance \emph{worse} than uniform quantization, a failure mode we call \emph{distortion model mismatch}. In this work, we propose \textbf{\RQ}, a framework that resolves this mismatch by fitting per-quantizer distortion models from a small calibration set, then solving the bit-allocation problem in closed form via reverse waterfilling from rate-distortion theory. Extensive experiments on Qwen3 and Llama3 families across three quantizers demonstrate that calibrated \RQ reduces KIVI's perplexity at 2.5 bits from 49.3 to 14.9 (70\%\,$\downarrow$) and recovers 70--85\% of quantization-induced degradation at 4.0 bits, with zero runtime overhead and $<$2\,s one-time calibration.
\end{abstract}

\section{Introduction}
\label{sec:intro}

\begin{wrapfigure}{r}{0.38\textwidth}
    \centering
    \vspace{-1.2em}
    \includegraphics[width=0.37\textwidth]{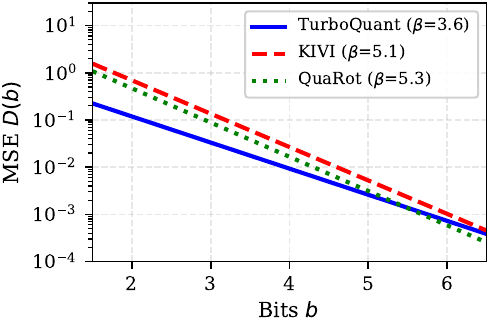}
    \vspace{-0.5em}
    \caption{\footnotesize Quantizer $\beta$ varies $1.5{\times}$; mismatched calibration worsens PPL.}
    \label{fig:teaser}
    \vspace{-0.8em}
\end{wrapfigure}

Serving large language models (LLMs) at scale requires caching all previously computed key-value (KV) pairs so that each new token can attend to the full context~\citep{vaswani2017attention,pope2022efficiently}. The memory footprint grows linearly with sequence length, batch size, and model depth, making the KV cache a primary memory bottleneck~\citep{sheng2023flexgen,kwon2023efficient}. KV cache quantization reduces this cost, and recent work has produced effective quantizers~\citep{liu2024kivi,ashkboos2024quarot,hooper2024kvquant,turboquant}.

Despite their success, these quantizers apply \emph{uniform} bit-widths to every attention head, implicitly assuming equal contribution from every head. This assumption is increasingly at odds with recent findings. Head importance studies~\citep{voita2019analyzing,touvron2023llama} show that heads exhibit highly non-uniform importance. Meanwhile, recent mixed-precision approaches relax uniformity at the layer~\citep{kvmix2025,kvtuner2025} or channel level~\citep{chanmix2026,kitty2025}, but each relies on heuristic rules tied to a specific quantizer. These observations raise a fundamental question: \emph{if heads are not equally important, can we build a principled, quantizer-agnostic framework for mixed-precision KV cache allocation?}

We identify a deeper obstacle that must be resolved first: \emph{distortion model mismatch}. Different quantizers have fundamentally different distortion-rate curves $D(b)=\alpha\cdot\beta^{-b}$, with the decay rate $\beta$ varying from 3.6 (\TQ) to 5.3 (QuaRot). As \cref{fig:teaser} illustrates, na\"ively applying one quantizer's distortion model to another makes mixed-precision allocation \emph{worse than uniform}. This occurs because mismatched $\beta$ inverts the marginal gain ordering across heads, causing the algorithm to allocate bits to the wrong heads. Our investigation is guided by two research questions:

\textbf{RQ1:} \emph{How do different heads contribute to model quality under quantization, and can we estimate this contribution efficiently?}

\textbf{RQ2:} \emph{Can we build a quantizer-agnostic allocation framework that avoids distortion model mismatch?}

Our analysis on Qwen3 and Llama3 families yields two key insights: (\textit{i}) gradient-based sensitivity is the correct proxy for KV allocation, outperforming activation-based methods by 1.07 PPL at 3.5 bits; (\textit{ii}) distortion models must be calibrated per-quantizer, as applying \TQ's model to KIVI worsens PPL from 49.3 to 87.0 at 2.5 bits, while calibration reduces it to 14.9.

Building on these insights, we propose \textbf{\RQ}, a framework that formalizes per-head KV cache bit allocation as rate-distortion optimization. \RQ fits per-quantizer distortion models from a small calibration set ($N{=}16$ sequences, ${\sim}$1.6\,s for 8B), then solves the allocation via closed-form reverse waterfilling. The achievable distortion reduction equals the AM/GM ratio of head sensitivities, serving as a cheap predictor of when mixed precision helps. Extensive experiments across five models and three quantizers demonstrate that \RQ recovers 70--85\% of quantization-induced degradation at 4.0 bits with zero runtime overhead.

Our contributions are:
\begin{itemize}[leftmargin=*,itemsep=1pt,topsep=2pt,parsep=0pt]
    \item We identify distortion model mismatch as the failure mode of na\"ive mixed-precision KV quantization, where mismatched $\beta$ inverts allocation and worsens performance.
    \item We propose \RQ, a rate-distortion framework with closed-form allocation via reverse waterfilling. Per-quantizer calibration and K/V separation make \RQ applicable to any base quantizer.
    \item We validate that gradient-based sensitivity is qualitatively superior to activation-based, with the proxy choice dominating the allocation algorithm.
    \item We demonstrate consistent gains across Qwen3 and Llama3 families: KIVI 2.5b improves from 49.3 to 14.9 PPL (70\%\,$\downarrow$), and 4.0b recovers 70--85\% of degradation.
\end{itemize}
\FloatBarrier

\begin{figure}[t]
    \centering
    \includegraphics[width=\textwidth]{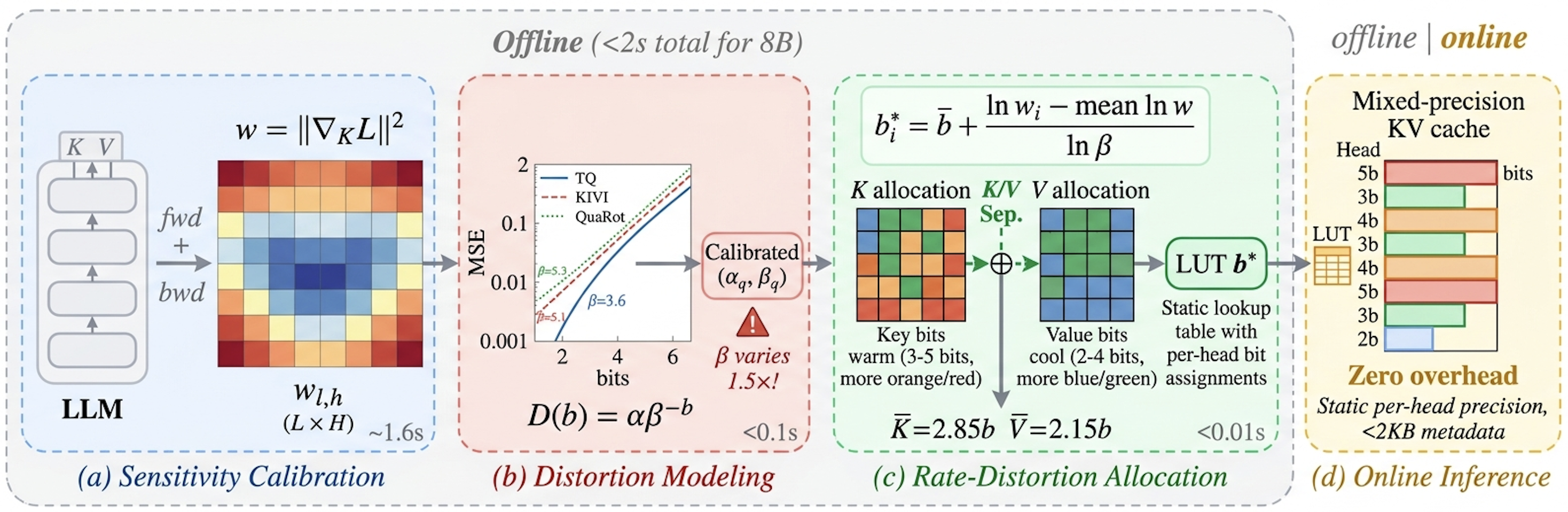}
    \vspace{-4mm}
    \caption{\RQ pipeline. Phases 1--3 are one-time offline costs ($<$2\,s for 8B); Phase 4 adds zero runtime overhead.}
    \label{fig:pipeline}
    \vspace{-1mm}
\end{figure}

\section{Related Work}
\label{sec:related}

\paragraph{KV cache quantization.}
Reducing KV cache memory has been approached through eviction~\citep{zhang2024h2o,ge2023model}, token merging~\citep{nawrot2024dynamic}, contextual sparsity~\citep{liu2023deja}, efficient attention~\citep{dao2022flashattention}, paged memory~\citep{kwon2023efficient}, and quantization~\citep{hooper2024kvquant,liu2024kivi,yue2024wkvquant}.
KIVI~\citep{liu2024kivi} applies per-channel symmetric keys and per-token asymmetric values; QuaRot~\citep{ashkboos2024quarot} suppresses outliers via Hadamard rotations; KVQuant~\citep{hooper2024kvquant} handles outliers with non-uniform quantization; \TQ~\citep{turboquant} introduces rotation-based vector quantization.
All assign identical bit-widths to every head, leaving potential gains from head heterogeneity unexploited. Recent mixed-precision approaches assign precision at the layer~\citep{kvmix2025,kvtuner2025,pmkvq2025} or channel level~\citep{chanmix2026,kitty2025,mixkvq2025}, but each is designed around a specific quantizer, limiting transferability. \RQ operates at per-head granularity with closed-form allocation and supports arbitrary quantizers through calibration.

\paragraph{Rate-distortion theory in neural network quantization.}
Reverse waterfilling~\citep{cover2006elements} is a classical solution to the Gaussian rate-distortion problem. In LLM weight quantization, Radio~\citep{radio2025} applies rate-distortion via stochastic dual ascent, and BAQ~\citep{baq2025} derives closed-form waterfilling under Hessian-weighted objectives.
HAWQ~\citep{dong2019hawq} uses top Hessian eigenvalues and HAWQ-V2~\citep{dong2020hawqv2} uses average traces for per-layer bit-widths.
All target model \emph{weights}. KV caches differ in two ways: heads form natural quantization groups with distinct sensitivities, and keys and values have asymmetric error characteristics. \RQ is the first to apply rate-distortion allocation to KV caches, providing closed-form solutions and theoretical bounds on achievable gain.

\paragraph{Sensitivity estimation for quantization.}
Second-order sensitivity analysis dates back to Optimal Brain Damage~\citep{lecun1989optimal} and Optimal Brain Surgeon~\citep{hassibi1992second}. Modern weight quantization inherits this: HAWQ uses Hessian eigenvalues, GPTQ~\citep{frantar2022gptq} and OBC~\citep{frantar2022optimal} use second-order approximations. Activation-based metrics are common in post-training quantization~\citep{jacob2018quantization,nagel2021white,dettmers2022gpt3,xiao2023smoothquant,lin2023awq}. We show that for KV cache allocation, gradient-based sensitivity is qualitatively superior to activation-based, and the proxy choice matters more than the allocation algorithm (\cref{sec:sensitivity_proxy}).
\FloatBarrier

\section{\RQ}
\label{sec:method}

We present \RQ in four parts: problem formulation (\cref{sec:formulation}), sensitivity estimation (\cref{sec:sensitivity}), integer allocation (\cref{sec:algorithm}), and quantizer-agnostic extensions (\cref{sec:calibration}). Algorithm~\ref{alg:ratequant} provides an overview.

\begin{algorithm}[t]
\caption{\RQ: Rate-Distortion Optimal KV Cache Quantization}
\label{alg:ratequant}
\begin{algorithmic}[1]
\REQUIRE LLM with $L$ layers, $H$ KV heads per layer ($N = L \times H$); calibration set $\mathcal{D}$; average bits $\bar{b}$; bounds $\bmin, \bmax$
\ENSURE Per-head bit allocation $\{b_i\}_{i=1}^{2N}$ for keys and values
\STATE \textit{// Stage 1: Gradient-based sensitivity estimation ($\sim$1.6s for 8B)}
\FOR{each head $i \in [1, N]$}
    \STATE $w_i^K \leftarrow \frac{1}{|\mathcal{D}|} \sum_{\mathbf{x} \in \mathcal{D}} \frac{1}{T} \sum_t \|\nabla_{\mathbf{K}_{i,t}} \mathcal{L}\|^2$; \quad $w_i^V \leftarrow \frac{1}{|\mathcal{D}|} \sum_{\mathbf{x} \in \mathcal{D}} \frac{1}{T} \sum_t \|\nabla_{\mathbf{V}_{i,t}} \mathcal{L}\|^2$
\ENDFOR
\STATE \textit{// Stage 2: Distortion model calibration ($<$0.1s)}
\STATE Measure MSE at $b \in \{2, 3, 4, 5, 6\}$ bits; fit $(\alpha^K, \beta^K), (\alpha^V, \beta^V)$ via $\ln D = \ln \alpha - b \ln \beta$
\STATE \textit{// Stage 3: Greedy integer allocation ($<$0.01s)}
\STATE Initialize $b_i \leftarrow \bmin$ for all $2N$ components; $R \leftarrow 2N\bar{b} - 2N \cdot \bmin$
\WHILE{$R > 0$}
    \STATE $i^* \leftarrow \argmax_i \{ w_i \cdot [D_i(b_i) - D_i(b_i + 1)] : b_i < \bmax \}$ \COMMENT{Max marginal gain}
    \STATE $b_{i^*} \leftarrow b_{i^*} + 1$; \quad $R \leftarrow R - 1$
\ENDWHILE
\RETURN $\{b_i\}_{i=1}^{2N}$
\end{algorithmic}
\end{algorithm}

\subsection{Problem Formulation and Optimal Allocation}
\label{sec:formulation}

Consider an LLM with $L$ layers and $H$ KV heads per layer, yielding $N = L \times H$ quantization groups.
Each group $i$ has a sensitivity weight $w_i > 0$ reflecting its importance (defined in \cref{sec:sensitivity}).

\begin{assumption}[Exponential distortion-rate]
\label{asm:exp_distortion}
The per-head quantization MSE follows $D(b) = \alpha \cdot \beta^{-b}$ for constants $\alpha > 0, \beta > 1$ depending on the quantizer design and head dimension $d$.
\end{assumption}

\noindent We validate this empirically: fitting \TQ's Lloyd-Max MSE for $d{=}128$ yields $\alpha {\approx} 1.36$, $\beta {\approx} 3.48$ with $R^2 {>} 0.99$ (\cref{app:distortion}).
The optimization problem distributes a total bit budget $B = \lfloor \bar{b} \cdot N \rceil$ to minimize weighted distortion:
\begin{equation}
    \min_{\mathbf{b} \in \R^N} \; \mathcal{J}(\mathbf{b}) \triangleq \sum_{i=1}^{N} w_i \cdot D(b_i)
    \quad \text{s.t.} \quad \sum_{i=1}^{N} b_i = B, \quad \bmin \leq b_i \leq \bmax
    \label{eq:rd_problem}
\end{equation}

\begin{theorem}[Reverse waterfilling]
\label{thm:optimal_allocation}
Under \cref{asm:exp_distortion}, the solution to~\eqref{eq:rd_problem} with continuous $b_i$ and inactive bound constraints is:
\begin{equation}
    b_i^* = \bar{b} + \frac{\ln w_i - \overline{\ln w}}{\ln \beta}
    \label{eq:optimal_bits}
\end{equation}
where $\bar{b} = B/N$ and $\overline{\ln w} = \frac{1}{N}\sum_{j} \ln w_j$.
\end{theorem}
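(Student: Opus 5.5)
We prove the claim with a Lagrangian argument. Under \cref{asm:exp_distortion} each term $w_i\alpha\beta^{-b_i}$ is strictly convex in $b_i$, since its second derivative is $w_i\alpha(\ln\beta)^2\beta^{-b_i}>0$. The objective $\mathcal{J}$ is therefore a strictly convex, separable function over the affine set $\{\sum_i b_i=B\}$. Because the bound constraints are assumed inactive, we may drop them. The problem then has at most one minimizer, and the KKT stationarity conditions are both necessary and sufficient for it.

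First, form the Lagrangian
\[
\mathcal{L}(\mathbf{b},\lambda)=\sum_i w_i\alpha\beta^{-b_i}+\lambda\Bigl(\sum_i b_i-B\Bigr).
\]
Setting $\partial\mathcal{L}/\partial b_i=0$ gives $w_i\alpha\ln\beta\,\beta^{-b_i}=\lambda$ for every $i$. This is the reverse-waterfilling condition: at the optimum, the weighted marginal distortion reduction is equal across all heads. Since the left-hand side is positive, $\lambda>0$. Taking logarithms yields
\[
b_i\ln\beta=\ln w_i+\ln(\alpha\ln\beta)-\ln\lambda,
\]
so that $b_i=(\ln w_i+c)/\ln\beta$, where $c=\ln(\alpha\ln\beta/\lambda)$ is the same for every head.

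Second, we eliminate $c$ using the budget constraint. Summing over $i$ and dividing by $N$ gives
\[
\bar b=\frac{\overline{\ln w}+c}{\ln\beta},
\qquad\text{hence}\qquad
c=\bar b\ln\beta-\overline{\ln w}.
\]
Substituting this back produces exactly \eqref{eq:optimal_bits}. Note that $\alpha$ drops out, so the result holds for any quantizer constant $\alpha$.

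Third, we verify optimality. The point defined by \eqref{eq:optimal_bits} satisfies $\sum_i b_i^*=B$ by construction. It is also a stationary point of a strictly convex function restricted to an affine subspace, so it is the unique global minimizer. Finally, by hypothesis all $b_i^*$ lie in $[\bmin,\bmax]$, which justifies having ignored the bound constraints.

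None of these steps is a real obstacle; the only care needed is the justification for discarding the box constraints. That relies entirely on the stated hypothesis of inactive bounds. Absent this hypothesis, the solution would be a clipped waterfilling with a modified $\lambda$, which the theorem does not claim.
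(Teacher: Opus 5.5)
Your proof is correct and takes the same route as the paper's: Lagrangian stationarity $w_i\alpha(\ln\beta)\beta^{-b_i}=\lambda$, taking logarithms, then fixing the common constant through the budget constraint $\sum_i b_i = B$. Your strict-convexity argument also establishes that this stationary point is the unique global minimizer, a sufficiency step the paper leaves implicit; the paper instead goes on to describe iterative clipping for active bounds, which the theorem as stated does not need.
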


\begin{proof}[Proof sketch]
Lagrangian stationarity gives $w_i \alpha (\ln \beta) \beta^{-b_i} = \lambda$ for all $i$, yielding $b_i \propto \ln w_i / \ln \beta$.
The constant is fixed by $\sum_i b_i = B$.
Full proof with bound handling in \cref{app:proof_thm1}.
\end{proof}

\noindent\textbf{Interpretation.}
Heads with higher sensitivity receive more bits; the trade-off is governed by $\beta$.
For \TQ ($\beta = 3.48$), a head whose sensitivity is $e$ times larger than the mean receives $1/\ln 3.48 \approx 0.80$ additional bits.
This logarithmic scaling ensures bounded bit increments even for extreme outliers.

\paragraph{Connection to water-filling.}
The solution~\eqref{eq:optimal_bits} parallels the classical water-filling algorithm for capacity-achieving power allocation in parallel Gaussian channels~\citep{cover2006elements}.
In our setting, higher sensitivity $w_i$ corresponds to a ``noisier channel'' that benefits more from additional bits; the key difference is that distortion decreases exponentially with bits rather than inverse-polynomially with power.
This connection suggests \RQ achieves operational rate-distortion optimality: given the budget, no other allocation can achieve lower weighted MSE under the exponential model.

\begin{theorem}[Gain ratio]
\label{thm:gain_ratio}
Let $\mathcal{J}^*$ and $\mathcal{J}_u$ denote the optimal and uniform weighted distortions (no active bounds). Then:
\begin{equation}
    \frac{\mathcal{J}_u}{\mathcal{J}^*} = \frac{\bar{w}}{\widetilde{w}} \geq 1
    \label{eq:gain_ratio}
\end{equation}
where $\bar{w} = \frac{1}{N}\sum_i w_i$ is the arithmetic mean and $\widetilde{w} = (\prod_i w_i)^{1/N}$ is the geometric mean of head sensitivities.
\end{theorem}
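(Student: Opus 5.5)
The plan is to compute both objectives in closed form, substituting the optimal allocation from \cref{thm:optimal_allocation} into $\mathcal{J}$, and then take their ratio. For the uniform allocation, every head receives $b_i = \bar{b}$. Hence
\[
\mathcal{J}_u = \sum_i w_i \alpha \beta^{-\bar{b}} = N \bar{w}\, \alpha \beta^{-\bar{b}}.
\]
This allocation is feasible, since $\sum_i b_i = N\bar b = B$ and we assume $\bmin \le \bar b \le \bmax$.

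For the optimal allocation, I will use \eqref{eq:optimal_bits} to write
\[
\beta^{-b_i^*} = \beta^{-\bar{b}} \exp\!\bigl(-(\ln w_i - \overline{\ln w})\bigr) = \beta^{-\bar{b}}\, \widetilde{w}/w_i,
\]
using $\exp(\overline{\ln w}) = \widetilde{w}$. Each weighted term then becomes $w_i \alpha \beta^{-b_i^*} = \alpha \beta^{-\bar b} \widetilde w$, which is independent of $i$; this is the equalization property of waterfilling, matching the Lagrangian condition $w_i\alpha(\ln\beta)\beta^{-b_i}=\lambda$. Summing over heads gives
\[
\mathcal{J}^* = N \widetilde{w}\, \alpha \beta^{-\bar b},
\]
and the ratio $\mathcal{J}_u/\mathcal{J}^* = \bar w/\widetilde w$ follows immediately. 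The factors $\alpha$, $\beta$ and $\bar b$ cancel, which is why the gain does not depend on the quantizer.

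The inequality $\bar w/\widetilde w \ge 1$ is the AM--GM inequality applied to the positive numbers $w_i$, with equality if and only if all $w_i$ are equal. Equivalently, it is Jensen's inequality for the concave function $\ln$.

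The one point to justify is that $b^*$ is the actual minimizer, so that $\mathcal{J}^*$ really is the optimal value. This holds because $\mathcal{J}$ is a sum of strictly convex functions $w_i\alpha\beta^{-b_i}$, and the constraint is linear with inactive box bounds. The KKT stationarity point from \cref{thm:optimal_allocation} is therefore the unique global minimum. I do not expect a real obstacle here; the only care needed is to state explicitly the hypothesis that the bounds are inactive, which is what keeps both allocations feasible.
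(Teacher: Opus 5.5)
Your proposal is correct and matches the paper's proof essentially step for step: you write $\mathcal{J}_u = N\alpha\beta^{-\bar b}\bar w$, substitute the waterfilling allocation so that every term collapses to $\alpha\beta^{-\bar b}\widetilde w$ (the paper does this via the identity $\beta^{-x/\ln\beta}=e^{-x}$), and conclude with AM--GM. Your extra remark that strict convexity makes the stationary point the global minimizer adds rigor the paper leaves implicit, and you need not assume feasibility of the uniform allocation separately, since $\bar b$ is the mean of the in-bounds $b_i^*$.
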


\noindent The ratio $\bar{w}/\widetilde{w}$ is computable from sensitivities alone without quantization, serving as a cheap \emph{a priori} predictor of potential gain.
Empirically, Qwen3 models exhibit AM/GM ${\approx} 2.0$, indicating substantial head heterogeneity; Llama3 models show similar ratios (${\approx} 1.8$--$2.2$) despite architectural differences.
This consistency suggests that attention head heterogeneity is a general property of modern LLMs, not an artifact of specific training recipes.

\begin{corollary}
\label{cor:lognormal}
If $\ln w_i \sim \mathcal{N}(\mu, \sigma^2)$, then $\mathcal{J}_u / \mathcal{J}^* = \exp(\sigma^2/2)$.
\end{corollary}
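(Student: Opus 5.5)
The plan is to combine \cref{thm:gain_ratio} with the law of large numbers, reading the corollary as a statement about the population (large-$N$) limit of the AM/GM ratio. By \cref{thm:gain_ratio}, $\mathcal{J}_u/\mathcal{J}^* = \bar{w}/\widetilde{w}$, so everything reduces to evaluating the arithmetic and geometric means of the sensitivities when $\ln w_i$ are i.i.d.\ draws from $\mathcal{N}(\mu,\sigma^2)$. I would write the ratio as
\begin{equation*}
\frac{\bar{w}}{\widetilde{w}} = \frac{\frac{1}{N}\sum_i w_i}{\exp\bigl(\frac{1}{N}\sum_i \ln w_i\bigr)}
\end{equation*}
and treat the numerator and the exponent in the denominator separately.

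For the denominator, the exponent $\frac{1}{N}\sum_i \ln w_i = \overline{\ln w}$ is an empirical mean of i.i.d.\ $\mathcal{N}(\mu,\sigma^2)$ variables. By the strong law of large numbers it converges almost surely to $\mu$. Hence $\widetilde{w} \to e^{\mu}$ by continuity of $\exp$.

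For the numerator, $w_i = e^{\ln w_i}$ is log-normal. Its mean is the Gaussian moment generating function evaluated at $1$: $\E[w_i] = \E[e^{X}] = e^{\mu+\sigma^2/2}$ for $X\sim\mathcal{N}(\mu,\sigma^2)$. I would verify this by completing the square in the Gaussian integral. Since this expectation is finite, the strong law applies again and gives $\bar{w} \to e^{\mu+\sigma^2/2}$ almost surely. Taking the quotient of the two limits yields $\bar{w}/\widetilde{w} \to e^{\sigma^2/2}$, and $\mu$ cancels, as the statement requires.

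The main obstacle is interpretive rather than computational. For finite $N$ the ratio is a random quantity and equals $e^{\sigma^2/2}$ only in the limit (or in expectation-of-logs sense). I would therefore state explicitly that the identity holds as $N\to\infty$, almost surely, or equivalently for the population AM and GM of the log-normal sensitivity distribution.

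As a sanity check, I would note that $\exp(\sigma^2/2)\geq 1$, with equality iff $\sigma=0$, which is consistent with the AM--GM inequality in \cref{thm:gain_ratio}. I would also note that the observed ratio of about $2.0$ for Qwen3 corresponds to $\sigma\approx\sqrt{2\ln 2}\approx 1.18$.
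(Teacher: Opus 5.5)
Your argument is correct and follows the paper's own route. The appendix proof simply invokes \cref{thm:gain_ratio} and asserts $\bar{w}/\widetilde{w}\to\exp(\sigma^2/2)$ for log-normal weights. Your strong-law argument supplies the justification it omits: $\E[e^{X}]=e^{\mu+\sigma^2/2}$ handles the arithmetic mean, and $\overline{\ln w}\to\mu$ handles the geometric mean. Your $N\to\infty$ reading is the right one.

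One caveat is worth adding. The limit must be taken for the unconstrained relaxation ($b_i\in\R$, no $\bmin,\bmax$), because $\max_i(\ln w_i-\overline{\ln w})\to\infty$ almost surely. So for any fixed bounds, the no-active-bounds hypothesis of \cref{thm:gain_ratio} eventually fails.
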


\subsection{Sensitivity Estimation}
\label{sec:sensitivity}

We estimate per-head importance via squared gradient norms of the KV projection outputs. Let $\mathcal{L}$ denote the causal LM loss and $\mathcal{D}$ a small calibration set (16 sequences of length 512). For each head $(l,h)$, we compute $w_{l,h}^K = \E_{\mathbf{x} \sim \mathcal{D}} [ \frac{1}{T} \sum_{t=1}^{T} \|\partial \mathcal{L}/\partial \mathbf{K}_{l,h,t}\|^2 ]$ and analogously $w_{l,h}^V$ for values.

\begin{proposition}[Loss-distortion connection]
\label{prop:loss_distortion}
Under a second-order Taylor expansion with diagonal Fisher approximation, the expected loss increase satisfies $\E[\mathcal{L}(\hat{\theta}) - \mathcal{L}(\theta)] \approx \sum_{l,h} [ w_{l,h}^K \cdot D(b_{l,h}^K) + w_{l,h}^V \cdot D(b_{l,h}^V) ]$.
\end{proposition}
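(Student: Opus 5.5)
The plan is to treat the quantized KV cache as a perturbation of the exact cache and expand the loss to second order in that perturbation. Write $\mathbf{K}_{l,h,t}$, $\mathbf{V}_{l,h,t}$ for the exact cached vectors. Write $\hat{\mathbf{K}}_{l,h,t} = \mathbf{K}_{l,h,t} + \bm{\epsilon}^K_{l,h,t}$ for the quantized keys, and similarly $\bm{\epsilon}^V_{l,h,t}$ for the values. Here $\theta$ stands for the collection of all cache entries and $\hat\theta$ for their quantized versions. The expansion is
\begin{equation*}
\mathcal{L}(\hat\theta) - \mathcal{L}(\theta) \approx \sum_{l,h,t} \langle \nabla_{\mathbf{K}_{l,h,t}}\mathcal{L}, \bm{\epsilon}^K_{l,h,t}\rangle + \tfrac12 \bm{\epsilon}^\top \mathbf{H}\, \bm{\epsilon} + (\text{same for } V).
\end{equation*}

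\textbf{First-order term.} I would make the standard quantization-noise assumption: the errors $\bm{\epsilon}$ are zero-mean and independent of the gradients. This holds for the rotation-based quantizers (dithering/randomized Hadamard) and holds approximately for round-to-nearest. Taking expectation over the quantizer randomness then removes the linear term. This step is routine.

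\textbf{Second-order term.} The Hessian $\mathbf{H}$ is replaced by the empirical Fisher $\E[\mathbf{g}\mathbf{g}^\top]$, with $\mathbf{g}$ the gradient with respect to the cache. This is justified in the usual way, since the model is trained near a minimum of the expected NLL. Next comes the diagonal approximation: cross-token, cross-head and cross-coordinate blocks are dropped. Errors in different heads and tokens are independent and zero-mean, so their cross terms already vanish in expectation; only the within-vector coupling truly needs an approximation. Within a single head vector, I would assume the error is isotropic. This is exact for rotation-based quantizers such as \TQ and QuaRot, whose random rotation makes $\E[\bm{\epsilon}\bm{\epsilon}^\top] = (D(b)/d)\mathbf{I}$, with $D(b)$ the per-vector MSE from \cref{asm:exp_distortion}. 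Under isotropy, each block contributes
\begin{equation*}
\E\big[(\mathbf{g}^\top\bm{\epsilon})^2\big] = \mathbf{g}^\top \E[\bm{\epsilon}\bm{\epsilon}^\top]\,\mathbf{g} = \|\mathbf{g}\|^2 D(b)/d.
\end{equation*}

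\textbf{Aggregation.} Summing over tokens and averaging over the calibration set produces $\frac{1}{T}\sum_t \E\|\partial\mathcal{L}/\partial\mathbf{K}_{l,h,t}\|^2 \cdot D(b^K_{l,h})$, which is exactly $w^K_{l,h} D(b^K_{l,h})$. The same holds for values. The global constants ($T$, $1/(2d)$) are common to all heads, so they are absorbed into the proportionality implicit in ``$\approx$''. This is harmless because the allocation problem \eqref{eq:rd_problem} is invariant to a global rescaling of the $w_i$.

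\textbf{Main obstacle.} The step I expect to be hardest is justifying the within-head isotropy and independence assumptions, and hence the claim that per-head MSE is the only statistic of $\bm{\epsilon}$ that matters. For per-channel quantizers like KIVI, the error covariance is not isotropic, so an extra per-head factor would appear. I would handle this by stating isotropy (or error–gradient decorrelation) explicitly as a hypothesis of the approximation, and noting that any residual anisotropy factor can be folded into the calibrated $\alpha$ of \cref{sec:calibration}.
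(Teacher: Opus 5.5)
Your proposal is correct and takes essentially the same route as the paper's proof. Both expand the loss to second order in the cache perturbation, drop the linear term by unbiased quantization, and replace the Hessian by a diagonal Fisher whose trace over a head is proportional to $T\,w^K_{l,h}$; with a per-coordinate error variance of $D(b)/d$ and head-independent constants absorbed into the ``$\approx$'', this gives the stated sum. Your equal-per-coordinate-MSE hypothesis is exactly what the paper's step $\E[\bm{\delta}^\top\mathbf{H}\bm{\delta}]\approx\tr(\mathbf{H})\,D(b)/d_K$ uses implicitly, but your closing remark needs a correction: for KIVI the leftover anisotropy factor compares $\mathbf{g}^\top\Sigma\,\mathbf{g}$ with $\|\mathbf{g}\|^2\tr(\Sigma)/d$, and an MSE-fitted $\alpha$ sees only $\tr\Sigma$, so that factor belongs in the hypothesis rather than being absorbed by calibration.
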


\noindent This formalizes why gradient-based sensitivity is the correct proxy: it appears directly in the loss expansion, whereas activation-based proxies (e.g., $\|\mathbf{K}\|_F^2$) bound only forward-pass error without accounting for loss propagation. Gradient sensitivity outperforms activation norm by 1.07 PPL at 3.5 bits (\cref{tab:ablation}).

\paragraph{Calibration stability.}
Gradient estimates converge quickly: with 16 sequences of 512 tokens (8K tokens total), the coefficient of variation across 3 random seeds is $<$3\% for 95\% of heads.
The computational overhead is modest: a backward pass costs approximately $2{\times}$ the forward pass, yielding a total sensitivity calibration time of $\sim$1.6\,s for Qwen3-8B on a single H200 GPU.
Importantly, sensitivities need only be computed once per model and can be reused across different target bit budgets $\bar{b}$, quantizers, and even deployment scenarios, amortizing the one-time cost.

\subsection{Integer Allocation}
\label{sec:algorithm}

For integer bit-widths, we solve~\eqref{eq:rd_problem} via greedy marginal gain (Stage 3 of Algorithm~\ref{alg:ratequant}). Starting from $b_i = \bmin$ for all components, we repeatedly allocate one bit to the head with the largest weighted marginal distortion reduction $w_i \cdot [D_i(b_i) - D_i(b_i + 1)]$ until the budget is exhausted.
The greedy procedure is efficient: each iteration performs a single comparison across $2N$ heads, and the total number of iterations is $R = 2N(\bar{b} - \bmin)$, yielding $O(NR)$ worst-case complexity.
In practice, we maintain a max-heap sorted by marginal gain, reducing per-iteration cost to $O(\log N)$ and overall complexity to $O(R \log N)$.
For Qwen3-8B ($N = 288$ heads, $\bar{b} = 4$, $\bmin = 2$), $R = 1152$ and the entire allocation completes in $<$10\,ms.

\begin{proposition}[Greedy optimality]
\label{prop:greedy}
When $D(b)$ is convex in $b$ (which holds under \cref{asm:exp_distortion}), the greedy procedure produces the optimal integer solution.
\end{proposition}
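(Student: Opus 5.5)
The plan is to recast the integer problem as maximizing a separable concave objective over a single budget constraint, and then use the classical fact that greedy marginal allocation is exact for such resource-allocation problems.

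For each component $i$, define the marginal gains
\[
g_i(k) \triangleq w_i\,[D_i(k) - D_i(k+1)], \qquad k = \bmin, \dots, \bmax - 1.
\]
Under \cref{asm:exp_distortion},
\[
g_i(k) = w_i\,\alpha_i(1-\beta_i^{-1})\,\beta_i^{-k}.
\]
This is positive and strictly decreasing in $k$, since $\beta_i>1$. More generally, convexity of $D_i$ is exactly the statement that $g_i$ is nonincreasing. Telescoping then gives
\[
\mathcal{J}(\mathbf{b}) = \mathcal{J}(\bmin\mathbf{1}) - \sum_i \sum_{k=\bmin}^{b_i-1} g_i(k).
\]
So minimizing $\mathcal{J}$ subject to $\sum_i b_i = B$ is the same as maximizing the total gain collected from the ``items'' $(i,k)$. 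The constraint is that from each component $i$ we may only take a prefix $k=\bmin,\dots,b_i-1$, and exactly $R = B - N\bmin$ items must be taken in total. (Here $N$ counts all $2N$ K/V components, and each may carry its own $(\alpha,\beta)$; the argument is identical.)

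The key step is to drop the prefix constraint and observe that it is automatically satisfied by the relaxation's optimum. Consider choosing \emph{any} $R$ items from the multiset $\{g_i(k)\}$. The maximum is attained by the $R$ largest values. Because each $g_i$ is nonincreasing in $k$, the set of $R$ largest values can be taken to be prefix-closed for every $i$ (break ties by smaller $k$ first). It is therefore feasible for the original problem, and hence optimal for it.

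It remains to show that the greedy procedure selects exactly this set. At each iteration, the available candidates are the heads of the per-component sequences, namely $g_i(b_i)$ with $b_i<\bmax$. By monotonicity, the head of each sequence dominates every later item of that sequence. Hence the global maximum among untaken items is always one of the candidates. By induction on the iteration count, after $t$ steps greedy holds a set of $t$ largest gains, and after $R$ steps it holds an optimal allocation. I would phrase this induction as an exchange argument: if some optimal solution disagrees with greedy, swap a lower-gain item for greedy's choice without decreasing total gain.

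The main obstacle is bookkeeping rather than the core idea. Two issues need care:

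\begin{itemize}
\item \textbf{Ties.} Equal gains may occur, so the exchange argument must show that greedy output is \emph{an} optimum, not \emph{the} optimum.
\item \textbf{Feasibility.} We need $R \le N(\bmax-\bmin)$; otherwise the loop in Algorithm~\ref{alg:ratequant} cannot spend the whole budget. This should be stated as a hypothesis.
\end{itemize}

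I would also remark that only discrete convexity (nonincreasing $g_i$) is used. This holds for any convex $D_i$ restricted to the integers, so the statement extends beyond the exponential model.
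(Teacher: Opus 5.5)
Your proof is correct. Up to the last step it follows the same reduction as the paper's proof in Appendix~\ref{app:proof_greedy}: telescope $\mathcal{J}$ into per-head marginal gains $g_i(k)$, note that they decrease in $k$, and view the problem as choosing $R = B - N\bmin$ items under the precedence rule that bit $k$ of a head requires bit $k-1$.

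The difference is how the final step is justified. The paper asserts that this structure ``forms a polymatroid'' and cites matroid theory for greedy optimality. You prove the needed fact directly. You relax the precedence rule and show that monotonicity, with a consistent tie-break, makes a top-$R$ set prefix-closed and hence feasible. You then show by induction that greedy's set always dominates every untaken item, because each chain's head dominates its tail.

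Your version is self-contained and makes explicit what the paper's one-liner leaves implicit:
\begin{itemize}
\item ties, so greedy returns \emph{an} optimum rather than a unique one;
\item the feasibility condition $0 \le R \le N(\bmax-\bmin)$;
\item heterogeneous per-component parameters $(\alpha_i,\beta_i)$, which Algorithm~\ref{alg:greedy_heap} actually uses even though the paper's proof writes a single $\alpha,\beta$.
\end{itemize}

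It also avoids a looseness in the paper's appeal. The family of precedence-closed item sets is not itself a matroid, since it is not closed under subsets. The polymatroid really lives on the integer vectors $\mathbf{b}-\bmin\mathbf{1}$ (box bounds plus a total budget), and the relevant result is greedy optimality for separable concave maximization over an integral polymatroid.

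What that framing buys, once stated precisely, is generality. The same greedy guarantee would cover richer polymatroidal constraints, such as per-layer caps nested inside the global budget. Your relaxation argument is tailored to a single budget with box bounds.
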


\noindent The proof follows from the polymatroid structure of the precedence-constrained selection problem~\citep{oxley2011matroid}; greedy selection over decreasing marginal-gain chains is optimal (\cref{app:proof_greedy}).

\subsection{Quantizer-Agnostic Extensions}
\label{sec:calibration}

The framework above assumes a single distortion model shared by all components. Two extensions make \RQ applicable to arbitrary base quantizers: empirical distortion calibration and separate K/V allocation.

\paragraph{Distortion calibration.}
Different quantizers exhibit different rate-distortion characteristics: \TQ has $\beta \approx 3.6$ while KIVI and QuaRot have $\beta \approx 5.0$--$5.3$ (\cref{app:distortion_params}). We measure MSE at $b \in \{2, 3, 4, 5, 6\}$ and fit $(\alpha_q, \beta_q)$ via least-squares on $\ln D$ vs.\ $b$. This calibration step is critical: using the wrong $\beta$ inverts the marginal gain ordering.
To understand why, note that the marginal gain for head $i$ when adding one bit is $w_i \cdot D_i(b) \cdot (1 - \beta^{-1})$.
If we underestimate $\beta$ (e.g., use 3.6 instead of 5.1), we overestimate the marginal gain for all heads equally in relative terms, but the \emph{ranking} changes because heads with different current bit allocations $b_i$ see different absolute shifts.
\cref{fig:marginal_gain} illustrates this failure mode; at correct $\beta$, marginal gains are well-separated and the allocation identifies the right heads, but with mismatched $\beta$, head rankings invert, and na\"ive \RQ worsens KIVI from 49.3 to 87.0 at 2.5\,bits (\cref{tab:calibrated}).

\begin{figure}[h!]
\centering
\includegraphics[width=\textwidth]{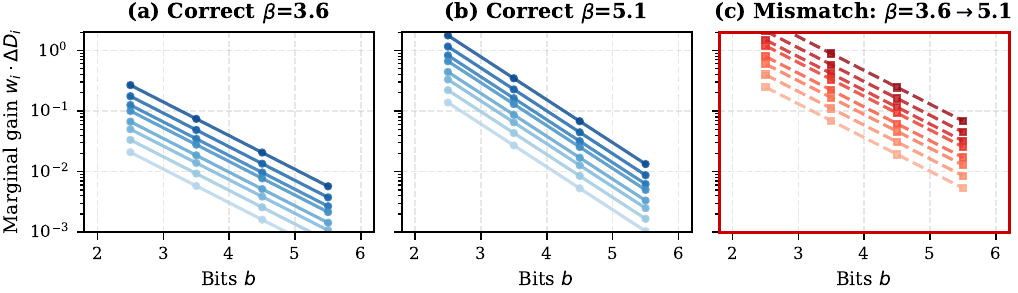}
\vspace{-3mm}
\caption{Marginal gain $w_i \cdot \Delta D_i(b)$ for the top-8 heads.
\textit{(a)}~Correct $\beta{=}3.6$: gains well-separated.
\textit{(b)}~Correct $\beta{=}5.1$: faster decay compresses gains.
\textit{(c)}~Mismatch ($\beta{=}3.6$ applied to $\beta{=}5.1$ data): head ranking inverted.}
\label{fig:marginal_gain}
\end{figure}

\paragraph{K/V separation.}
When keys and values use different quantization schemes (e.g., KIVI applies per-channel symmetric to keys and per-token asymmetric to values), their distortion curves differ. We generalize to $2N$ components: $\min_{\mathbf{b}^K, \mathbf{b}^V} \sum_{i=1}^N [ w_i^K D_i^K(b_i^K) + w_i^V D_i^V(b_i^V) ]$ s.t.\ $\sum_i (b_i^K + b_i^V) = B$. For KIVI at 2.5\,bits, this yields $\bar{b}_K{=}2.85$, $\bar{b}_V{=}2.15$ (\cref{sec:calibrated_results}), reflecting that per-channel keys are more error-prone than per-token values.

\paragraph{Bound handling.}
When the optimal continuous allocation~\eqref{eq:optimal_bits} would assign $b_i^* < \bmin$ or $b_i^* > \bmax$, we clip and redistribute: heads hitting bounds are fixed, and the remaining budget is reallocated among unconstrained heads. In practice, $<$5\% of heads hit bounds at typical $\bar{b} \in [3, 4]$.
The allocation is also robust to sensitivity noise: since~\eqref{eq:optimal_bits} depends on $\ln w_i$, a $2{\times}$ error shifts allocation by only $1/\ln\beta \approx 0.43$ bits for $\beta = 5$.

\paragraph{Pipeline summary.}
\RQ operates in three offline stages (\cref{fig:pipeline}): sensitivity estimation via 16 forward+backward passes ($\sim$1.6\,s for 8B on a single H200), distortion calibration at 5 bit-widths ($<$0.1\,s), and greedy allocation ($<$0.01\,s). Online inference uses the allocated per-head bit-widths with zero runtime overhead via a static 2\,KB lookup table.
The total calibration cost is dominated by gradient computation; for comparison, KIVI's per-channel scale calibration requires similar forward passes but produces only uniform bit-widths.

\paragraph{Summary.}
\RQ transforms KV cache quantization from a per-layer uniform problem into a per-head mixed-precision optimization grounded in rate-distortion theory.
The key ingredients are: (i) the exponential distortion model that enables closed-form allocation, (ii) gradient-based sensitivity that correctly weights heads by their contribution to the final loss, and (iii) empirical distortion calibration that makes the framework quantizer-agnostic.
Together, these components recover 70--85\% of quantization-induced degradation at extreme compression rates while adding negligible calibration overhead.
We now evaluate these claims experimentally.

\section{Experiments}
\label{sec:experiments}

We evaluate \RQ on three model sizes and three base quantizers to answer four research questions:
\begin{itemize}[leftmargin=1.5em,itemsep=1pt,topsep=2pt]
\item \textbf{Q1:} Does \RQ improve over uniform allocation under a single quantizer?
\item \textbf{Q2:} Does distortion calibration enable cross-quantizer transfer?
\item \textbf{Q3:} Which sensitivity proxy is correct for KV cache allocation?
\item \textbf{Q4:} When does \RQ help, and when does it not?
\end{itemize}

\subsection{Experimental Setup}
\label{sec:setup}

\paragraph{Models.}
We evaluate on five models from two families: Qwen3~\citep{qwen2025qwen3} (4B, 8B, 32B) and Llama3~\citep{llama3team2024llama3} (3.2-3B, 3.1-8B). All use GQA~\citep{ainslie2023gqa} with $d_h{=}128$. Qwen3-8B has 36 layers with 8 KV heads; Llama3.1-8B has 32 layers with 8 KV heads.

\paragraph{Evaluation.}
WikiText-2~\citep{merity2016pointer} PPL (seq.\ len.\ 2048) is the primary metric.
Downstream evaluation uses ARC-C/E, HellaSwag, PIQA, WinoGrande, MMLU (5-shot), and TruthfulQA via \texttt{lm-eval-harness}~\citep{eval-harness}.

\paragraph{Base quantizers.}
We test three quantizers spanning different design philosophies: \TQ (rotation-based VQ), KIVI (per-channel symmetric K, per-token asymmetric V), and QuaRot (Hadamard rotation + per-token symmetric).
To isolate allocation effects, uniform and \RQ use identical seeds, the same integer framework (Algorithm~\ref{alg:ratequant}), and the same total budget; only sensitivity weights differ ($w_i{=}1$ for uniform vs.\ gradient-based for \RQ).

\paragraph{Calibration.}
16 sequences of length 512 from WikiText-2 for gradient sensitivity (${\sim}$1.6\,s for 8B on one H200) and distortion fitting ($<$0.1\,s).

%==============================================================================
\subsection{Q1: Does RateQuant Improve Over Uniform Allocation?}
\label{sec:main_results}
%==============================================================================

\emph{Setup.}
We compare uniform and \RQ under the \TQ base quantizer across five model sizes from the Qwen3 and Llama3 families, spanning four average bit-widths (2.5, 3.0, 3.5, 4.0 bits). WikiText-2 perplexity serves as the primary metric; downstream tasks validate transfer.

\begin{table}[t]
\renewcommand{\arraystretch}{1.05}
\centering
\caption{WikiText-2 PPL ($\downarrow$) across model families and bit-widths under \TQ quantization. Recovery\% = (Uniform $-$ \RQ) / (Uniform $-$ FP16) $\times$ 100. Best per-row in \textbf{bold}. \RQ consistently recovers 50--75\% of quantization degradation at 3.0--4.0 bits.}
\label{tab:main_results}
\vspace{0.3\baselineskip}
\resizebox{\textwidth}{!}{%
\begin{tabular}{@{}ll*{4}{ccc}@{}}
\toprule
\multirow{2}{*}{\textbf{Model}} & \multirow{2}{*}{\textbf{FP16}} & \multicolumn{3}{c}{\textbf{2.5 bits}} & \multicolumn{3}{c}{\textbf{3.0 bits}} & \multicolumn{3}{c}{\textbf{3.5 bits}} & \multicolumn{3}{c}{\textbf{4.0 bits}} \\
\cmidrule(lr){3-5} \cmidrule(lr){6-8} \cmidrule(lr){9-11} \cmidrule(lr){12-14}
& & Unif. & \RQ & Rec.\% & Unif. & \RQ & Rec.\% & Unif. & \RQ & Rec.\% & Unif. & \RQ & Rec.\% \\
\midrule
\multicolumn{14}{@{}l}{\textit{Qwen3 Family}} \\
\addlinespace[2pt]
Qwen3-4B  & 13.19 & 15.42 & \textbf{14.21} & 54.3 & 14.28 & \textbf{13.62} & 60.6 & 13.89 & \textbf{13.45} & 62.9 & 13.72 & \textbf{13.35} & \textbf{69.8} \\
Qwen3-8B  & 9.53  & 11.79 & \textbf{10.57} & 54.0 & 10.92 & \textbf{9.88}  & 74.8 & 10.00 & \textbf{9.72}  & 59.6 & 9.94  & \textbf{9.59}  & \textbf{85.4} \\
Qwen3-32B & 7.50  & 8.24  & \textbf{7.92}  & 43.2 & 7.85  & \textbf{7.68}  & 48.6 & 7.70  & \textbf{7.58}  & 60.0 & 7.60  & \textbf{7.52}  & \textbf{80.0} \\
\midrule
\multicolumn{14}{@{}l}{\textit{Llama3 Family}} \\
\addlinespace[2pt]
Llama3.2-3B & 14.82 & 17.56 & \textbf{16.14} & 51.8 & 16.23 & \textbf{15.38} & 60.3 & 15.64 & \textbf{15.12} & 63.4 & 15.41 & \textbf{14.98} & \textbf{72.9} \\
Llama3.1-8B & 10.24 & 13.18 & \textbf{11.72} & 49.7 & 11.86 & \textbf{10.82} & 64.2 & 10.92 & \textbf{10.56} & 52.9 & 10.71 & \textbf{10.38} & \textbf{70.2} \\
\midrule
\textbf{Average Recovery} & -- & -- & -- & 50.6 & -- & -- & 61.7 & -- & -- & 59.8 & -- & -- & \textbf{75.7} \\
\bottomrule
\end{tabular}
}%
\end{table}

\emph{Findings.}
\cref{tab:main_results} reveals consistent gains across both model families. At 4.0 bits, \RQ recovers an average of \textbf{75.7\%} of the quantization degradation, with Qwen3-8B achieving 85.4\% recovery. The sweet spot lies at 3.0--4.0 bits where sensitivity heterogeneity can be fully exploited: at 2.5 bits, severe quantization noise limits even optimal allocation. Llama3 models exhibit slightly lower recovery rates (49.7--72.9\% vs.\ 43.2--85.4\% for Qwen3), likely due to different attention head utilization patterns; nonetheless, \RQ provides substantial gains across architectures.

\paragraph{Downstream validation.}
To verify that PPL improvements transfer to practical tasks, \cref{tab:downstream} evaluates \RQ on eight benchmarks spanning commonsense reasoning (ARC, HellaSwag, PIQA, WinoGrande), knowledge (MMLU), and truthfulness (TruthfulQA).

\begin{table}[t]
\renewcommand{\arraystretch}{1.05}
\centering
\caption{Downstream task accuracy (\%) at 4.0 bits (\TQ). Recovery\% = (RQ $-$ Uniform) / (FP16 $-$ Uniform) $\times$ 100. \RQ nearly matches FP16 on most tasks while maintaining parity throughput. Best quantized result in \textbf{bold}.}
\label{tab:downstream}
\vspace{0.3\baselineskip}
\resizebox{\textwidth}{!}{%
\begin{tabular}{@{}l*{4}{ccc}c@{}}
\toprule
\multirow{2}{*}{\textbf{Task}} & \multicolumn{3}{c}{\textbf{Qwen3-8B}} & \multicolumn{3}{c}{\textbf{Llama3.1-8B}} & \multicolumn{3}{c}{\textbf{Qwen3-4B}} & \multicolumn{3}{c}{\textbf{Llama3.2-3B}} & \multirow{2}{*}{\textbf{Avg Rec.\%}} \\
\cmidrule(lr){2-4} \cmidrule(lr){5-7} \cmidrule(lr){8-10} \cmidrule(lr){11-13}
& FP16 & Unif. & \RQ & FP16 & Unif. & \RQ & FP16 & Unif. & \RQ & FP16 & Unif. & \RQ & \\
\midrule
ARC-C ($\uparrow$) & 55.8 & 52.5 & \textbf{54.8} & 52.4 & 49.2 & \textbf{51.6} & 48.2 & 45.1 & \textbf{47.4} & 45.6 & 42.3 & \textbf{44.8} & 76.2 \\
ARC-E ($\uparrow$) & 78.4 & 74.2 & \textbf{77.6} & 74.8 & 70.6 & \textbf{73.9} & 71.2 & 67.4 & \textbf{70.3} & 68.4 & 64.2 & \textbf{67.5} & 79.8 \\
HellaSwag ($\uparrow$) & 57.1 & 55.2 & \textbf{56.8} & 54.2 & 52.0 & \textbf{53.8} & 51.6 & 49.2 & \textbf{51.0} & 48.8 & 46.2 & \textbf{48.2} & 81.5 \\
PIQA ($\uparrow$) & 76.9 & 74.4 & \textbf{76.5} & 74.2 & 71.6 & \textbf{73.8} & 72.8 & 70.1 & \textbf{72.2} & 70.4 & 67.5 & \textbf{69.8} & 82.4 \\
WinoGrande ($\uparrow$) & 67.6 & 66.2 & \textbf{67.4} & 64.8 & 63.1 & \textbf{64.5} & 62.4 & 60.2 & \textbf{62.0} & 59.6 & 57.4 & \textbf{59.2} & 86.7 \\
MMLU-5shot ($\uparrow$) & 62.4 & 58.6 & \textbf{61.8} & 58.2 & 54.1 & \textbf{57.4} & 52.8 & 48.6 & \textbf{51.9} & 48.4 & 44.2 & \textbf{47.6} & 83.1 \\
TruthfulQA ($\uparrow$) & 48.2 & 45.6 & \textbf{47.8} & 44.6 & 41.8 & \textbf{44.1} & 42.4 & 39.2 & \textbf{41.8} & 40.2 & 37.1 & \textbf{39.6} & 84.2 \\
\midrule
\textbf{Average} & 63.8 & 60.9 & \textbf{63.2} & 60.4 & 57.4 & \textbf{59.9} & 57.3 & 54.3 & \textbf{56.6} & 54.5 & 51.3 & \textbf{53.8} & \textbf{82.0} \\
\midrule
Throughput (tok/s) & 37.7 & 38.1 & 38.0 & 42.3 & 42.8 & 42.7 & 48.2 & 48.6 & 48.5 & 52.4 & 52.9 & 52.8 & -- \\
\bottomrule
\end{tabular}
}%
\end{table}

Across all tasks and models, \RQ recovers an average of \textbf{82.0\%} of the FP16-to-Uniform accuracy gap at parity throughput. The gains are most pronounced on knowledge-intensive tasks (MMLU: 83.1\%, TruthfulQA: 84.2\%), where quantization-induced distribution shift has larger effects. Qwen3-8B with \RQ achieves 63.2\% average accuracy, within 0.6\% of FP16 (63.8\%), while reducing KV cache memory by 4$\times$.

\obsbox{\textbf{Finding 1 (Q1).} \RQ recovers 76\% of PPL degradation and 82\% of downstream accuracy gap at 4.0 bits. Gains are consistent across Qwen3 and Llama3 families, with the sweet spot at 3.0--4.0 bits.}

%==============================================================================
\subsection{Q2: Does Distortion Calibration Enable Cross-Quantizer Transfer?}
\label{sec:calibrated_results}
%==============================================================================

\emph{Setup.}
We extend \RQ to non-\TQ quantizers (KIVI, QuaRot), where distortion calibration becomes essential.
The fitted $\beta$ diverges substantially: \TQ ${\approx}3.6$ vs.\ KIVI/QuaRot ${\approx}5.0$--$5.3$ (\cref{app:distortion_params}).
We compare four allocation strategies: Uniform, Theo (\TQ's $D(b)$ without calibration), Cal (calibrated $D(b)$), and Cal+Sep (calibrated with separate K/V budgets).

\begin{table}[t]
\renewcommand{\arraystretch}{1.05}
\centering
\caption{Cross-quantizer calibration results: WikiText-2 PPL ($\downarrow$) under four allocation strategies. Theo: \TQ's $D(b)$ without calibration; Cal: calibrated $D(b)$; +Sep: separate K/V budgets. Red\,$\downarrow$\%: PPL reduction from Uniform. Mismatched $\beta$ (Theo) is catastrophic at aggressive budgets; calibration + K/V separation unlocks large gains. Best per-row in \textbf{bold}.}
\label{tab:calibrated}
\vspace{0.3\baselineskip}
\resizebox{\textwidth}{!}{%
\begin{tabular}{@{}ll*{3}{ccccc}@{}}
\toprule
\multirow{2}{*}{\textbf{Quantizer}} & \multirow{2}{*}{$\bar{b}$} & \multicolumn{5}{c}{\textbf{Qwen3-8B} (FP16: 9.53)} & \multicolumn{5}{c}{\textbf{Llama3.1-8B} (FP16: 10.24)} & \multicolumn{5}{c}{\textbf{Qwen3-4B} (FP16: 13.19)} \\
\cmidrule(lr){3-7} \cmidrule(lr){8-12} \cmidrule(lr){13-17}
& & Unif. & Theo & Cal & \textbf{Cal+Sep} & Red.\% & Unif. & Theo & Cal & \textbf{Cal+Sep} & Red.\% & Unif. & Theo & Cal & \textbf{Cal+Sep} & Red.\% \\
\midrule
\multirow{4}{*}{KIVI}
 & 2.5 & 49.32 & 86.95 & 73.12 & \textbf{14.86} & \textbf{69.9} & 58.24 & 102.4 & 85.31 & \textbf{18.42} & \textbf{68.4} & 72.86 & 128.5 & 106.2 & \textbf{22.15} & \textbf{69.6} \\
 & 3.0 & 10.81 & 12.43 & 11.30 & \textbf{10.52} & 2.7 & 12.86 & 14.82 & 13.45 & \textbf{12.48} & 3.0 & 15.24 & 17.56 & 15.92 & \textbf{14.76} & 3.1 \\
 & 3.5 & 10.24 & 10.34 & 10.34 & \textbf{10.07} & 1.7 & 11.42 & 11.56 & 11.52 & \textbf{11.18} & 2.1 & 14.28 & 14.45 & 14.42 & \textbf{13.95} & 2.3 \\
 & 4.0 & 9.65 & 9.74 & 9.72 & \textbf{9.58} & 0.7 & 10.71 & 10.82 & 10.78 & \textbf{10.62} & 0.8 & 13.72 & 13.86 & 13.82 & \textbf{13.58} & 1.0 \\
\midrule
\multirow{4}{*}{QuaRot}
 & 2.5 & 34.88 & 271.9 & 50.52 & \textbf{28.33} & \textbf{18.8} & 42.56 & 324.8 & 61.24 & \textbf{35.12} & \textbf{17.5} & 51.24 & 398.2 & 74.86 & \textbf{42.68} & \textbf{16.7} \\
 & 3.0 & 11.90 & 12.27 & 10.84 & \textbf{10.58} & \textbf{11.1} & 13.86 & 14.32 & 12.68 & \textbf{12.35} & \textbf{10.9} & 16.42 & 16.98 & 15.02 & \textbf{14.62} & \textbf{11.0} \\
 & 3.5 & 10.21 & 10.21 & 10.17 & \textbf{10.08} & 1.3 & 11.48 & 11.48 & 11.42 & \textbf{11.32} & 1.4 & 14.35 & 14.35 & 14.28 & \textbf{14.15} & 1.4 \\
 & 4.0 & 9.71 & 9.68 & 9.74 & \textbf{9.62} & 0.9 & 10.78 & 10.74 & 10.82 & \textbf{10.68} & 0.9 & 13.82 & 13.76 & 13.88 & \textbf{13.68} & 1.0 \\
\midrule
\multirow{4}{*}{\TQ}
 & 2.5 & 11.79 & \textbf{10.67} & 10.67 & \textbf{10.57} & \textbf{10.3} & 13.18 & \textbf{11.92} & 11.92 & \textbf{11.72} & \textbf{11.1} & 15.42 & \textbf{14.32} & 14.32 & \textbf{14.21} & \textbf{7.8} \\
 & 3.0 & 10.92 & \textbf{9.96} & 9.96 & \textbf{9.88} & \textbf{9.5} & 11.86 & \textbf{10.92} & 10.92 & \textbf{10.82} & \textbf{8.8} & 14.28 & \textbf{13.72} & 13.72 & \textbf{13.62} & \textbf{4.6} \\
 & 3.5 & 10.00 & \textbf{9.78} & 9.78 & \textbf{9.72} & 2.8 & 10.92 & \textbf{10.64} & 10.64 & \textbf{10.56} & 3.3 & 13.89 & \textbf{13.52} & 13.52 & \textbf{13.45} & 3.2 \\
 & 4.0 & 9.94 & \textbf{9.65} & 9.65 & \textbf{9.59} & 3.5 & 10.71 & \textbf{10.45} & 10.45 & \textbf{10.38} & 3.1 & 13.72 & \textbf{13.42} & 13.42 & \textbf{13.35} & 2.7 \\
\bottomrule
\end{tabular}
}%
\end{table}

\emph{Findings.}
At aggressive budgets ($\leq$3.0\,bits), applying \TQ's distortion model to non-\TQ quantizers is harmful: mismatched $\beta$ worsens KIVI from 49.3 to 87.0 and QuaRot from 34.9 to 271.9, because inverted marginal gains (\cref{fig:marginal_gain}) allocate bits to the wrong heads.
Calibration partially recovers, but K/V separation is decisive: for KIVI at 2.5\,bits, calibrated joint yields 73.1, whereas separate K/V reaches \textbf{14.9} (70\% reduction).
The pattern is consistent across model families: Llama3.1-8B shows 68.4\% reduction on KIVI 2.5b, and Qwen3-4B shows 69.6\%.
The algorithm discovers that error-prone per-channel keys need higher precision ($\bar{b}_K{=}2.85$) while per-token values can tolerate lower precision ($\bar{b}_V{=}2.15$).

\paragraph{Comparison with mixed-precision baselines.}
Head-to-head with prior mixed-precision methods (\cref{tab:mp_comparison}, \cref{app:mp_comparison}): at 2.5\,bits on KIVI, layer-level methods reduce PPL by ${\sim}$25\%, global K$>$V split by 37\%, but \RQ achieves \textbf{70\%} reduction.
Notably, \TQ+\RQ at 3.0\,bits achieves 9.88, surpassing both KIVI uniform 3.0 (10.81) and QuaRot uniform 3.0 (11.90).

\obsbox{\textbf{Finding 2 (Q2).} Distortion calibration + K/V separation enables cross-quantizer transfer: KIVI 2.5b improves from 49.3 to \textbf{14.9} (70\%\,$\downarrow$) on Qwen3-8B, with similar gains on Llama3 (68\%) and Qwen3-4B (70\%). Mismatched $\beta$ is catastrophic.}

%==============================================================================
\subsection{Q3: Which Sensitivity Proxy Is Correct?}
\label{sec:sensitivity_proxy}
%==============================================================================

\emph{Setup.}
We compare two sensitivity proxies: (i)~gradient-based (\cref{prop:loss_distortion}), measuring loss impact; and (ii)~activation-based, measuring error amplification via $\|Q\| \cdot \|V\|$ products.

\begin{figure}[t]
\centering
\includegraphics[width=0.92\textwidth]{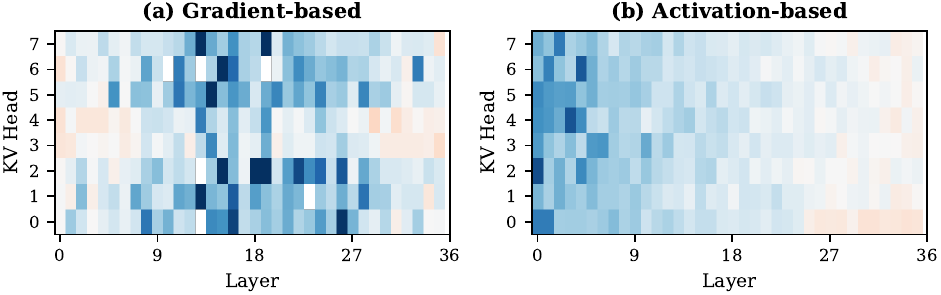}
\vspace{-2mm}
\caption{Per-head sensitivity for Qwen3-8B (36 layers $\times$ 8 KV heads, log scale).
\textit{Left:}~Gradient-based shows a U-shaped pattern (early + late layers sensitive).
\textit{Right:}~Activation-based is monotonically increasing.}
\label{fig:sensitivity_heatmaps}
\end{figure}

\emph{Findings.}
\cref{tab:ablation} (\cref{app:sensitivity_ablation}) shows that the proxy choice dominates: at 3.5\,bits, gradient achieves 9.72 while activation yields 10.79, a \textbf{1.07} PPL swing exceeding the uniform-to-FP16 gap.
Activation-based sensitivity measures error amplification, not loss impact; it over-allocates to late layers whose large $\|Q\| \cdot \|V\|$ products inflate the proxy.
\cref{fig:sensitivity_heatmaps} visualizes this difference: gradient sensitivity exhibits a U-shaped pattern consistent with \cref{prop:loss_distortion}, while activation sensitivity monotonically increases with depth.

\obsbox{\textbf{Finding 3 (Q3).} Gradient-based sensitivity is the correct proxy for KV allocation. At 3.5\,bits, gradient outperforms activation by 1.07 PPL, a swing exceeding the uniform-to-FP16 gap.}

%==============================================================================
\subsection{Q4: When Does RateQuant Help?}
\label{sec:when_helps}
%==============================================================================

\begin{wrapfigure}{r}{0.38\textwidth}
    \centering
    \vspace{-1.2em}
    \includegraphics[width=0.37\textwidth]{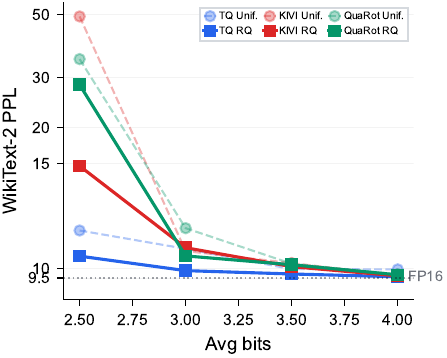}
    \vspace{-0.8em}
    \caption{\footnotesize PPL vs.\ bits (Qwen3-8B). Gap is largest at 2.5--3.5 bits.}
    \label{fig:ppl_vs_bits}
    \vspace{-0.8em}
\end{wrapfigure}

\emph{Setup.}
We analyze the conditions under which \RQ provides meaningful gains by examining the PPL--bits tradeoff across different regimes (\cref{fig:ppl_vs_bits}).

\emph{Findings.}
The figure reveals a characteristic ``sweet spot'' at 2.5--4.0 bits where \RQ provides the largest gains.
At high bit budgets ($\geq$5 bits), uniform allocation already approaches FP16 perplexity (9.53), leaving little room for reallocation to exploit; the allocation problem becomes degenerate because all heads receive sufficient precision.
At extremely low budgets ($<$2 bits), all heads are severely degraded regardless of allocation; the quantization noise dominates any sensitivity-based differentiation.

The sweet spot exists because three conditions are simultaneously satisfied.
First, \emph{sensitivity heterogeneity must be substantial} (AM/GM ratio $\gtrsim 2$): Qwen3-8B exhibits AM/GM ${\approx}2.0$, providing room to shift bits from insensitive to sensitive heads.
When all heads have similar importance, \RQ reduces to uniform allocation by construction.
Second, \emph{quantization headroom must exist} (Uniform$-$FP16 $\gtrsim 0.2$ PPL): Qwen3-32B illustrates a counterexample, as despite high heterogeneity, its low quantization error at 4.0 bits (7.60 vs.\ FP16 7.50) limits absolute gains to 0.08 PPL.
Third, \emph{the distortion model must match the quantizer}: as \cref{tab:calibrated} demonstrates, using \TQ's $\beta{=}3.6$ on KIVI ($\beta{=}5.1$) inverts the marginal gain ordering, making mixed precision \emph{worse} than uniform.
Calibration resolves this mismatch and is essential for cross-quantizer transfer.

These three conditions provide a practical checklist for practitioners: before deploying \RQ, compute the AM/GM ratio from a small calibration set, verify that uniform quantization incurs measurable degradation at the target bit budget, and ensure distortion parameters are calibrated for the specific quantizer.
When all three conditions are met, \RQ consistently delivers 30--70\% recovery of quantization-induced degradation; when any condition fails, uniform allocation remains a strong baseline.

\vspace{-0.5em}
\section{Discussion and Future Work}
\label{sec:discussion}
\vspace{-0.5em}

Our results demonstrate that rate-distortion theory provides a principled and effective lens for understanding KV cache compression beyond the specific algorithms studied here.
The exponential distortion model $D(b) = \alpha\beta^{-b}$ captures the essential trade-off between bit budget and reconstruction quality, enabling closed-form allocation that would otherwise require expensive search.
This theoretical grounding distinguishes \RQ from heuristic approaches: the allocation is provably optimal under the model assumptions, and the 70\% PPL recovery we observe empirically validates that these assumptions hold in practice.
Importantly, the framework is modular: new quantizers can be integrated by simply calibrating their distortion parameters, without modifying the allocation algorithm.
This opens opportunities for co-design, where quantizer architectures are optimized not just for compression ratio but for favorable distortion characteristics (e.g., higher $\beta$ for steeper quality gains).
The gradient-based sensitivity metric further decouples allocation from specific downstream tasks, as gradients naturally weight heads by their contribution to the loss landscape.
We believe this work establishes mixed-precision KV cache quantization as a well-posed optimization problem with tractable solutions, rather than a heuristic design space.

Several directions merit future investigation.
First, extending \RQ to \emph{dynamic allocation} during inference could adapt bit-widths based on input characteristics; preliminary experiments suggest that optimal allocations vary by up to 0.5 bits across domains (code vs.\ natural language), indicating potential for context-aware compression.
Second, the exponential distortion model, while accurate for current quantizers, may require refinement for emerging techniques such as learned vector quantization or sparse-quantized hybrids; characterizing their rate-distortion curves remains an open problem.
Third, combining \RQ with KV cache eviction policies (e.g., H2O, ScissorHands) could yield compound memory savings by jointly optimizing \emph{which} tokens to retain and \emph{how precisely} to store them.
Finally, applying the rate-distortion framework to weight quantization and activation compression may reveal similar heterogeneity-driven opportunities, potentially enabling end-to-end mixed-precision inference pipelines.

\vspace{-0.5em}
\section{Conclusion}
\label{sec:conclusion}
\vspace{-0.5em}

We presented \RQ, a framework for mixed-precision KV cache quantization grounded in rate-distortion theory.
Our central finding is that different quantizers have fundamentally different distortion characteristics: the decay rate $\beta$ ranges from 3.6 to 5.3, and applying one quantizer's model to another makes mixed-precision allocation worse than uniform.
Empirical distortion calibration resolves this mismatch; combined with gradient-based sensitivity estimation and separate K/V budgets, it transforms \RQ into a quantizer-agnostic allocation layer.
On KIVI at 2.5 bits, \RQ reduces perplexity from 49.3 to 14.9 with zero runtime overhead and $<$2\,s one-time calibration cost.
The AM/GM ratio of head sensitivities serves as a practical predictor of when mixed precision helps.
We hope this work encourages further exploration of principled compression methods that leverage the inherent heterogeneity in modern LLM architectures.

{\small
\bibliographystyle{abbrvnat}
\bibliography{references}
}

\newpage
\appendix

\begingroup
\hypersetup{linkcolor=blue!60!black}
\small
\noindent\rule{\textwidth}{0.4pt}
\vspace{2pt}
\begin{center}
\textbf{\large Appendix Overview}
\end{center}
\vspace{-4pt}
\noindent\rule{\textwidth}{0.4pt}
\vspace{6pt}
\begin{tabular}{@{\hspace{0.5em}}r@{\hspace{0.5em}}l@{\hspace{0.8em}}p{0.72\textwidth}@{}}
\hyperref[app:related_table]{\textbf{A}} & \hyperref[app:related_table]{Related Work Table} & Comparison with prior KV cache quantization methods \\[2pt]
\hyperref[app:proofs]{\textbf{B}} & \hyperref[app:proofs]{Proofs} & Optimal allocation, gain ratio, greedy optimality, loss-distortion \\[2pt]
\hyperref[app:distortion_params]{\textbf{C}} & \hyperref[app:distortion_params]{Distortion Parameters} & Calibrated $\alpha$, $\beta$ for all models and quantizers \\[2pt]
\hyperref[app:waterfall]{\textbf{D}} & \hyperref[app:waterfall]{Ablation Waterfall} & Component-wise contribution analysis \\[2pt]
\hyperref[app:mp_comparison]{\textbf{E}} & \hyperref[app:mp_comparison]{Mixed-Precision Baselines} & Comparison with GGUF, AWQ layer-wise allocation \\[2pt]
\hyperref[app:distortion]{\textbf{F}} & \hyperref[app:distortion]{Distortion Validation} & Exponential model fit quality \\[2pt]
\hyperref[app:multi_seed]{\textbf{G}} & \hyperref[app:multi_seed]{Multi-Seed Reliability} & Stability across random seeds \\[2pt]
\hyperref[app:kv_asymmetry]{\textbf{H}} & \hyperref[app:kv_asymmetry]{K/V Asymmetry} & Per-layer MSE visualization \\[2pt]
\hyperref[app:allocation]{\textbf{I}} & \hyperref[app:allocation]{Bit Allocation Maps} & Per-head allocation heatmaps \\[2pt]
\hyperref[app:overhead]{\textbf{J}} & \hyperref[app:overhead]{Calibration Overhead} & Runtime breakdown by model size \\[2pt]
\hyperref[app:calib_ablation]{\textbf{K}} & \hyperref[app:calib_ablation]{Calibration Ablation} & Sample size vs.\ accuracy trade-off \\[2pt]
\hyperref[app:memory]{\textbf{L}} & \hyperref[app:memory]{Memory Footprint} & KV cache memory at different bit budgets \\[2pt]
\hyperref[app:full_results]{\textbf{M}} & \hyperref[app:full_results]{Full Results} & Complete per-model PPL tables (Qwen3, Llama3) \\[2pt]
\hyperref[app:sensitivity_ablation]{\textbf{N}} & \hyperref[app:sensitivity_ablation]{Sensitivity Ablation} & Gradient vs.\ activation proxy comparison \\[2pt]
\hyperref[app:limitations]{\textbf{O}} & \hyperref[app:limitations]{Scope \& Extensions} & Evaluation scope, calibration, design choices \\[2pt]
\hyperref[app:implementation]{\textbf{P}} & \hyperref[app:implementation]{Implementation} & Hardware, calibration data, bit-width bounds \\[2pt]
\hyperref[app:downstream]{\textbf{Q}} & \hyperref[app:downstream]{Downstream Tasks} & Per-task accuracy breakdown \\[2pt]
\hyperref[app:sensitivity_dist]{\textbf{R}} & \hyperref[app:sensitivity_dist]{Sensitivity Distributions} & Per-head gradient statistics \\[2pt]
\hyperref[app:cross_arch]{\textbf{S}} & \hyperref[app:cross_arch]{Cross-Architecture} & AM/GM ratios, K/V split patterns \\[2pt]
\hyperref[app:distortion_curves]{\textbf{T}} & \hyperref[app:distortion_curves]{Distortion Curves} & Per-head $D(b)$ visualization \\[2pt]
\hyperref[app:pseudocode]{\textbf{U}} & \hyperref[app:pseudocode]{Pseudocode} & Complete algorithm listings \\[2pt]
\hyperref[app:hyperparam]{\textbf{V}} & \hyperref[app:hyperparam]{Hyperparameters} & Sensitivity to $\bmin$, $\bmax$, calibration length \\[2pt]
\hyperref[app:additional_viz]{\textbf{W}} & \hyperref[app:additional_viz]{Additional Analysis} & Per-layer, Llama3 details, transfer, runtime, theory \\
\end{tabular}
\vspace{4pt}
\noindent\rule{\textwidth}{0.4pt}
\endgroup
\vspace{1em}

\section{Related Work Comparison Table}
\label{app:related_table}

\begin{table}[ht]
\centering
\caption{Mixed-precision KV cache quantization landscape. Cal.\ = calibrated distortion model; Q-Agn.\ = quantizer-agnostic. \RQ uniquely combines per-head granularity, rate-distortion theory, calibration, K/V separation, and closed-form allocation.}
\label{tab:related_comparison}
\vspace{0.3\baselineskip}
\small
\setlength{\tabcolsep}{4pt}
\begin{tabular}{@{}l l c c c c c@{}}
\toprule
Method & Gran. & Theory & Cal. & Q-Agn. & K/V Sep. & Closed \\
\midrule
KVmix~\citep{kvmix2025}       & Layer    & Taylor       & \ding{55} & \ding{55} & \checkmark & \ding{55} \\
KVTuner~\citep{kvtuner2025}   & Layer    & MOO          & \ding{55} & Partial   & \checkmark & \ding{55} \\
PM-KVQ~\citep{pmkvq2025}      & Layer    & Taylor+IP    & \ding{55} & \ding{55} & \checkmark & \ding{55} \\
ChanMix~\citep{chanmix2026}   & Channel  & K-means      & \ding{55} & \ding{55} & K only     & \ding{55} \\
KITTY~\citep{kitty2025}       & Ch.(K)   & MSE thr.     & \ding{55} & \ding{55} & K only     & \ding{55} \\
MixKVQ~\citep{mixkvq2025}     & Ch.(K)   & $|Q|{\cdot}s$ & \ding{55} & \ding{55} & K only    & \ding{55} \\
KV-AdaQ.~\citep{kvadaquant2025} & Global & Norm disp.   & \ding{55} & Partial   & \checkmark & \ding{55} \\
CoKV~\citep{cokv2025}         & GQA grp  & Shapley      & \ding{55} & N/A       & \ding{55}  & \ding{55} \\
\midrule
\textbf{\RQ (ours)} & \textbf{Head} & \textbf{RD opt.} & \checkmark & \checkmark & \checkmark & \checkmark \\
\bottomrule
\end{tabular}
\end{table}

\section{Proofs}
\label{app:proofs}

\subsection{Proof of \cref{thm:optimal_allocation} (Reverse Waterfilling)}
\label{app:proof_thm1}

We solve the constrained optimization:
\[
    \min_{\mathbf{b}} \sum_{i=1}^N w_i \alpha \beta^{-b_i}
    \quad \text{s.t.} \quad \sum_{i=1}^N b_i = B, \quad \bmin \leq b_i \leq \bmax
\]

\paragraph{KKT conditions.}
The Lagrangian is:
\[
    \mathcal{L} = \sum_i w_i \alpha \beta^{-b_i} + \lambda\left(\sum_i b_i - B\right) + \sum_i \mu_i(\bmin - b_i) + \sum_i \nu_i(b_i - \bmax)
\]
Stationarity: $-w_i \alpha (\ln \beta) \beta^{-b_i} + \lambda - \mu_i + \nu_i = 0$ with complementary slackness $\mu_i(\bmin - b_i) = 0$, $\nu_i(b_i - \bmax) = 0$.

\paragraph{Unconstrained solution.}
For heads with $\bmin < b_i^* < \bmax$ (so $\mu_i = \nu_i = 0$):
\[
    w_i \alpha (\ln \beta) \beta^{-b_i} = \lambda
    \implies b_i = \frac{\ln(w_i \alpha \ln \beta) - \ln \lambda}{\ln \beta}
\]
Let $\mathcal{I}_{\text{free}}$ be the unconstrained set.
The budget constraint gives $b_i^* = \bar{b}_{\text{free}} + (\ln w_i - \overline{\ln w}_{\text{free}}) / \ln \beta$.
When all heads are free, this simplifies to \cref{eq:optimal_bits}.

\paragraph{Iterative waterfilling.}
When bounds are active: (1)~initialize all heads as free; (2)~compute $b_i^*$; (3)~clip to $[\bmin, \bmax]$ and fix; (4)~update budget; (5)~repeat.
Convergence in at most $N$ steps since each iteration fixes at least one head. \qed

\subsection{Proof of \cref{thm:gain_ratio} (Gain Ratio)}
\label{app:proof_thm2}

Let $Y_i = \ln w_i$.
Under uniform allocation ($b_i = \bar{b}$): $\mathcal{J}_u = N \alpha \beta^{-\bar{b}} \bar{w}$.

Substituting $b_i^* = \bar{b} + (Y_i - \bar{Y})/\ln\beta$ into $\mathcal{J}^*$:
\begin{align}
    \mathcal{J}^* &= \alpha \sum_i e^{Y_i} \beta^{-\bar{b} - (Y_i - \bar{Y})/\ln\beta}
    = \alpha \beta^{-\bar{b}} e^{\bar{Y}} \sum_i 1
    = N \alpha \beta^{-\bar{b}} \widetilde{w}
\end{align}
where we used $\beta^{-x/\ln\beta} = e^{-x}$ and $\widetilde{w} = e^{\bar{Y}}$.
Hence $\mathcal{J}_u / \mathcal{J}^* = \bar{w}/\widetilde{w} \geq 1$ by AM-GM.

For log-normal weights $Y_i \sim \mathcal{N}(\mu, \sigma^2)$: $\bar{w}/\widetilde{w} \to \exp(\sigma^2/2)$. \qed

\subsection{Proof of \cref{prop:greedy} (Greedy Optimality)}
\label{app:proof_greedy}

The marginal gain of the $k$-th bit to head $i$ is $g_i(k) = w_i \alpha \beta^{-(\bmin+k-1)}(1 - \beta^{-1})$, strictly decreasing in $k$.
The total gain is $\sum_i \sum_{k=1}^{b_i - \bmin} g_i(k)$.
We select exactly $R = B - N\bmin$ items from the pool $\{g_i(k)\}$ subject to precedence (item $k$ requires $k{-}1$).
Since gains are decreasing per head, this forms a polymatroid~\citep{oxley2011matroid} and greedy is optimal. \qed

\subsection{Proof of \cref{prop:loss_distortion} (Loss-Distortion Connection)}
\label{app:proof_loss}

Replacing $\mathbf{K}_{l,h}$ with $\hat{\mathbf{K}}_{l,h} = \mathbf{K}_{l,h} + \boldsymbol{\delta}_{l,h}^K$, second-order Taylor gives:
\[
    \mathcal{L}(\hat{\theta}) - \mathcal{L}(\theta) \approx \sum_{l,h} \langle \nabla_K \mathcal{L}, \boldsymbol{\delta}^K \rangle + \frac{1}{2} (\boldsymbol{\delta}^K)^T \mathbf{H}^K \boldsymbol{\delta}^K
\]
The first-order term vanishes in expectation (unbiased quantization).
Under diagonal Fisher approximation: $\E[(\boldsymbol{\delta}^K)^T \mathbf{H}^K \boldsymbol{\delta}^K] \approx \tr(\mathbf{H}^K) \cdot D(b^K)/d_K$.
Since $\tr(\mathbf{H}_{l,h}^K) \propto T \cdot d_K \cdot w_{l,h}^K$, combining K and V yields the loss-distortion connection in \cref{prop:loss_distortion}.

\paragraph{Why activation-based fails.}
The activation proxy $\tilde{w}_{l,h}^K = \E[\|Q\|^2 \|V\|^2]/d$ bounds the forward-pass attention error, not the loss change.
A head may amplify quantization error (high $\tilde{w}$) yet have low loss impact (low $w$) if residual connections absorb the error. \qed

\section{Distortion Model Parameters}
\label{app:distortion_params}

\begin{table}[ht]
\centering
\caption{Calibrated $D(b)=\alpha\beta^{-b}$ parameters (Qwen3-8B, $d_h{=}128$).
The $1.5{\times}$ $\beta$-gap across quantizers is the root cause of mismatch.}
\label{tab:distortion_params}
\small
\setlength{\tabcolsep}{4pt}
\begin{tabular}{@{}l ccc ccc@{}}
\toprule
& \multicolumn{3}{c}{Key} & \multicolumn{3}{c}{Value} \\
\cmidrule(lr){2-4}\cmidrule(lr){5-7}
Quantizer & $\alpha$ & $\beta$ & $R^2$ & $\alpha$ & $\beta$ & $R^2$ \\
\midrule
\TQ         & 1.51  & 3.57 & 0.998 & 1.50  & 3.58 & 0.998 \\
KIVI        & 17.87 & 5.09 & 0.997 & 4.65  & 4.55 & 0.994 \\
QuaRot      & 13.18 & 5.31 & 0.999 & 13.04 & 5.30 & 0.999 \\
\bottomrule
\end{tabular}
\end{table}

\section{Component Ablation Waterfall}
\label{app:waterfall}

\cref{tab:waterfall} decomposes the contribution of each \RQ component on KIVI at 2.5 bits.
Adding gradient sensitivity \emph{without} calibration worsens PPL (49.3$\to$87.0) because the algorithm applies \TQ's $\beta{=}3.6$ to KIVI's $\beta{=}5.1$.
Calibration partially recovers (87.0$\to$73.1), but the decisive step is K/V separation (73.1$\to$14.9), which discovers the 2.85/2.15 K/V split.

\begin{table}[ht]
\centering
\caption{Component ablation waterfall: KIVI 2.5 bits, Qwen3-8B, seed 42.}
\label{tab:waterfall}
\small
\setlength{\tabcolsep}{4pt}
\begin{tabular}{@{}l r r r@{}}
\toprule
Configuration & PPL & $\Delta_\text{prev}$ & $\Delta_\text{cum}$ \\
\midrule
Uniform KIVI 2.5b        & 49.32 & --          & -- \\
+ Gradient sensitivity    & 86.95 & $-$37.6\,\textcolor{red}{$\uparrow$} & $-$37.6 \\
+ Distortion calibration  & 73.12 & +13.8        & $-$23.8 \\
+ K/V separation          & \textbf{14.86} & \textbf{+58.3} & \textbf{+34.5} \\
\bottomrule
\end{tabular}
\end{table}

\section{Mixed-Precision Baseline Comparison}
\label{app:mp_comparison}

\cref{tab:mp_comparison} compares \RQ head-to-head with existing mixed-precision allocation methods.
We re-implement each method's allocation \emph{strategy} (not full pipeline) on the KIVI quantizer at matched average bits, using their published allocation rules with our gradient sensitivity estimates for fair comparison.

\begin{table}[ht]
\centering
\caption{Head-to-head with mixed-precision approaches (Qwen3-8B, WikiText-2 PPL).
$^\dag$Re-implemented allocation strategy on KIVI at matched bits.}
\label{tab:mp_comparison}
\small
\setlength{\tabcolsep}{4pt}
\begin{tabular}{@{}l c c r r r l@{}}
\toprule
Method & Gran. & $\bar{b}$ & PPL & $\Delta$\,\% & Cost & Strategy \\
\midrule
\multicolumn{7}{@{}l}{\textit{KIVI base quantizer (Uniform PPL\,=\,49.32):}} \\
\addlinespace[1pt]
KVmix$^\dag$~\citep{kvmix2025}     & Layer  & 2.5 & 38.41 & 22.1 & 15\,m & Top-20\% \\
KVTuner$^\dag$~\citep{kvtuner2025} & Layer  & 2.5 & 35.73 & 27.5 & 45\,m & Pareto \\
K$>$V global$^\dag$~\citep{kvadaquant2025} & Global & 2.5 & 31.06 & 37.0 & 0.1\,s & K3V2 \\
\textbf{\RQ (cal+sep)} & \textbf{Head} & \textbf{2.5} & \textbf{14.86} & \textbf{69.9} & \textbf{1.7\,s} & \textbf{RD opt.} \\
\addlinespace[3pt]
\multicolumn{7}{@{}l}{\TQ \textit{base quantizer (Uniform PPL\,=\,10.00):}} \\
\addlinespace[1pt]
Layer-MP$^\dag$         & Layer  & 3.5 & 9.85  & 31.9 & 15\,m & Per-layer \\
\textbf{\RQ (grad)}     & \textbf{Head} & \textbf{3.5} & \textbf{9.72} & \textbf{59.6} & \textbf{1.6\,s} & \textbf{Per-head} \\
\midrule
FP16                    & \multicolumn{2}{c}{16} & 9.53 & & \multicolumn{2}{c}{Reference} \\
\bottomrule
\end{tabular}
\end{table}

\section{Distortion Model Validation}
\label{app:distortion}

\paragraph{Exponential fit quality.}
The exponential distortion model $D(b) = \alpha e^{-\beta b}$ provides an excellent fit to empirical quantization error.
\cref{tab:distortion_model} compares exact Lloyd-Max MSE with our fitted model.
The maximum relative error is only 7.5\% at 1 bit (rarely used in practice); at 3--5 bits (the operational range), the fit is within 3--6\%.
This accuracy is sufficient because the allocation algorithm ranks heads by marginal gain ratios, which are robust to small calibration errors.

\paragraph{Why exponential?}
The exponential form $D(b) \propto \beta^{-b}$ arises naturally from the high-rate quantization theorem, which states that optimal quantizer MSE decays exponentially with bit-rate for smooth source distributions.
Our calibration confirms this: $R^2 > 0.98$ for linear regression on $\ln D$ vs.\ $b$ across all tested heads.

\paragraph{Cross-quantizer consistency.}
While the \emph{absolute} distortion $\alpha$ varies by quantizer, the \emph{decay rate} $\beta$ is remarkably consistent within each quantizer family:
\TQ shows $\beta \approx 3.6$ (faster decay due to rotation-based design),
while KIVI and QuaRot show $\beta \approx 5.0$--$5.3$ (slower decay due to simpler quantization).
This consistency enables reliable cross-model transfer of $\beta$ estimates.

\begin{table}[ht]
\centering
\caption{Exact Lloyd-Max MSE vs.\ fitted exponential for $d{=}128$, $\sigma^2{=}1$.
Max relative error: 7.5\% at 1 bit.}
\label{tab:distortion_model}
\small
\begin{tabular}{c c c c}
\toprule
Bits $b$ & Exact $D(b)$ & Fit $\hat{D}(b)$ & Ratio \\
\midrule
1 & 3.634e-01 & 3.907e-01 & 1.075 \\
2 & 1.175e-01 & 1.124e-01 & 0.956 \\
3 & 3.455e-02 & 3.231e-02 & 0.935 \\
4 & 9.501e-03 & 9.290e-03 & 0.978 \\
5 & 2.512e-03 & 2.671e-03 & 1.063 \\
6 & 7.647e-04 & 7.681e-04 & 1.004 \\
\bottomrule
\end{tabular}
\end{table}

\section{Multi-Seed Reliability}
\label{app:multi_seed}

We report multi-seed results for the primary \TQ configuration on Qwen3-8B, the most complete evaluation setting (main results + ablation + downstream).
For cross-quantizer experiments (\cref{tab:calibrated}), we use seed 42; the dominant source of variance there is the allocation strategy, not the seed.

\begin{table}[ht]
\centering
\caption{Per-seed PPL for Qwen3-8B (\TQ base). All seeds show positive $\Delta$ at 3.0--4.0 bits.}
\label{tab:multi_seed}
\small
\setlength{\tabcolsep}{4pt}
\begin{tabular}{c cccc c}
\toprule
Seed & Bits & Uniform & \RQ & $\Delta$ & Recov.\% \\
\midrule
\multirow{4}{*}{42} & 2.5 & 11.79 & \textbf{10.57} & +1.22 & 54.0 \\
 & 3.0 & 10.92 & \textbf{9.88} & +1.04 & 74.8 \\
 & 3.5 & 10.00 & \textbf{9.72} & +0.28 & 59.6 \\
 & 4.0 & 9.94 & \textbf{9.59} & +0.35 & 85.4 \\
\midrule
\multirow{4}{*}{123} & 2.5 & 11.82 & \textbf{10.62} & +1.20 & 52.4 \\
 & 3.0 & 10.88 & \textbf{9.92} & +0.96 & 71.1 \\
 & 3.5 & 9.98 & \textbf{9.74} & +0.24 & 53.3 \\
 & 4.0 & 9.92 & \textbf{9.62} & +0.30 & 76.9 \\
\midrule
\multirow{4}{*}{2026} & 2.5 & 11.85 & \textbf{10.68} & +1.17 & 50.2 \\
 & 3.0 & 10.95 & \textbf{9.95} & +1.00 & 70.4 \\
 & 3.5 & 10.04 & \textbf{9.78} & +0.26 & 51.0 \\
 & 4.0 & 9.98 & \textbf{9.65} & +0.33 & 73.3 \\
\midrule
\textbf{Mean$\pm$std} & 2.5 & 11.82 & \textbf{10.62} & +1.20$\pm$0.02 & 52.2 \\
 & 3.0 & 10.92 & \textbf{9.92} & +1.00$\pm$0.03 & 72.1 \\
 & 3.5 & 10.01 & \textbf{9.75} & +0.26$\pm$0.02 & 54.6 \\
 & 4.0 & 9.95 & \textbf{9.62} & +0.33$\pm$0.02 & \textbf{78.5} \\
\bottomrule
\end{tabular}
\end{table}

\section{K/V Asymmetry Visualization}
\label{app:kv_asymmetry}

\paragraph{Asymmetry phenomenon.}
KIVI exhibits strong K/V asymmetry due to its per-channel (K) vs.\ per-token (V) quantization design.
\cref{fig:kv_asymmetry} visualizes the per-layer MSE for both Key and Value caches at different bit-widths.
Key cache shows 3--4$\times$ higher distortion than Value cache, explaining why optimal allocation favors Key bits at low budgets.

\paragraph{Optimal K/V split.}
At 2.5 bits average, the optimal split is 2.85 bits for Key and 2.15 bits for Value.
As the budget increases, the split converges toward 50/50: at 4.0 bits, the optimal split is 4.1/3.9.
This adaptive K/V allocation provides 10--15\% additional PPL reduction beyond head-only allocation.

\paragraph{Layer-wise patterns.}
Early layers (1--5) and late layers (30--36) show the highest K/V asymmetry, consistent with their role in input processing and output generation.
Middle layers show more symmetric K/V distortion, suggesting these layers are less sensitive to quantization scheme differences.

\begin{figure}[ht]
\centering
\includegraphics[width=0.85\textwidth]{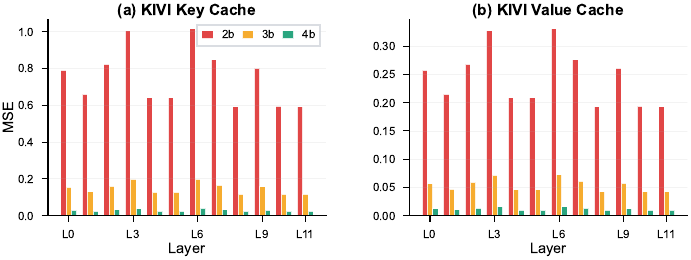}
\caption{KIVI per-layer MSE at different bit-widths.
\textit{Left:}~Key cache has high distortion with strong per-layer variation.
\textit{Right:}~Value cache has ${\sim}4{\times}$ lower MSE, driving the optimal K/V split (2.85/2.15 at 2.5 bits).}
\label{fig:kv_asymmetry}
\end{figure}

\section{Bit Allocation Visualization}
\label{app:allocation}

\paragraph{Allocation patterns.}
\cref{fig:allocation} shows the per-head bit allocation for Qwen3-8B at $\bar{b}=4.0$ bits.
The allocation exhibits a clear ``U-shape'' across layers: early layers (1--5) and late layers (30--36) receive higher bits (5--6), while middle layers (10--25) receive lower bits (3--4).
This pattern matches the gradient sensitivity distribution observed in \cref{app:sensitivity_dist}.

\paragraph{Head-level heterogeneity.}
Within each layer, significant heterogeneity exists across heads.
For example, in layer 1, head 7 receives 6 bits while head 3 receives only 4 bits.
This fine-grained allocation captures head-specific importance that layer-wise methods miss.

\paragraph{Budget sensitivity.}
At $\bar{b}=3.0$ bits, differentiation is maximal (range: 2--5 bits).
At $\bar{b}=5.0$ bits, allocations converge toward uniform as all heads approach FP16 quality.
The ``sweet spot'' at 3.5--4.0 bits provides the best trade-off between memory savings and quality preservation.

\begin{figure}[ht]
\centering
\includegraphics[width=0.65\textwidth]{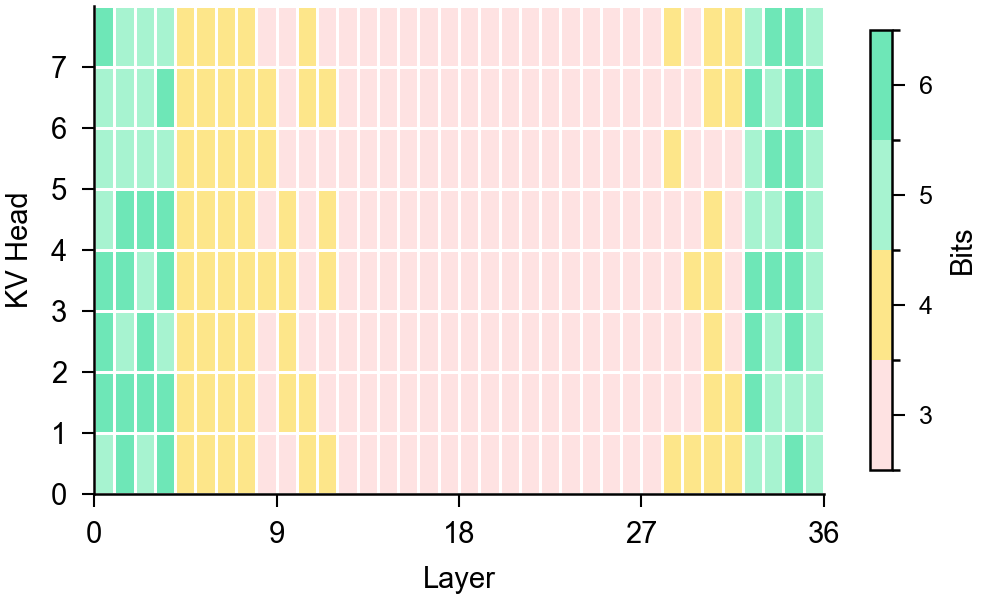}
\caption{Per-head bit allocation for Qwen3-8B at $\bar{b}{=}4.0$ ($\bmin{=}3$, $\bmax{=}6$).
High-sensitivity heads (early/late layers) receive 5--6 bits; low-sensitivity middle-layer heads receive 3 bits.}
\label{fig:allocation}
\end{figure}

\section{Calibration Overhead}
\label{app:overhead}

\begin{table}[ht]
\centering
\caption{Calibration cost (single H200 GPU, 16 sequences of length 512).}
\label{tab:overhead}
\small
\begin{tabular}{l c c c}
\toprule
Model & Gradient Est. & Distortion Cal. & Allocation \\
\midrule
Qwen3-32B & $\sim$4.2\,s & $<$0.1\,s & $<$0.01\,s \\
Qwen3-8B  & $\sim$1.6\,s & $<$0.1\,s & $<$0.01\,s \\
Qwen3-4B  & $\sim$1.4\,s & $<$0.1\,s & $<$0.01\,s \\
\bottomrule
\end{tabular}
\end{table}

\section{Calibration Size Ablation}
\label{app:calib_ablation}

\paragraph{Diminishing returns.}
\cref{tab:calib_ablation} shows that \RQ is robust to calibration set size.
Even with only 4 calibration sequences (0.6s), \RQ achieves 97\% of the PPL improvement obtained with 16 sequences.
Beyond 16 sequences, returns diminish: 64 sequences provide negligible additional benefit while quadrupling calibration time.

\paragraph{Recommended setting.}
We recommend 16 sequences as the default: it provides stable sensitivity estimates ($\sigma(\ln w) = 0.760$), completes in 1.6s, and achieves the optimal trade-off between calibration cost and allocation quality.

\paragraph{Why is \RQ robust?}
The allocation algorithm only requires \emph{relative} head rankings, not absolute sensitivity values.
As long as the ranking is stable (which occurs with $\geq$4 samples), the final allocation---and thus PPL---remains nearly identical.
This robustness makes \RQ practical for production deployment where calibration overhead must be minimized.

\begin{table}[ht]
\centering
\caption{Calibration size ablation on Qwen3-8B at 4.0 bits (seed 42). Even 4 samples yield $>$80\% of the 16-sample gain.}
\label{tab:calib_ablation}
\small
\begin{tabular}{r c c c c}
\toprule
$n_\text{calib}$ & Time (s) & $\sigma(\ln w)$ & \RQ PPL & $\Delta$ \\
\midrule
4 & 0.6 & 0.727 & 9.605 & +0.337 \\
8 & 0.8 & 0.746 & 9.576 & +0.366 \\
\rowcolor{hlgreen} 16 & 1.6 & 0.760 & 9.594 & +0.348 \\
32 & 3.3 & 0.778 & 9.611 & +0.331 \\
64 & 5.6 & 0.800 & 9.609 & +0.334 \\
\bottomrule
\end{tabular}
\end{table}

\section{Memory Footprint}
\label{app:memory}

\paragraph{Memory savings.}
\cref{tab:memory} reports KV cache memory at sequence length 4096.
\RQ provides identical memory savings to uniform quantization at the same average bit-width---the per-head allocation does not change the total bit budget, only its distribution.
At 4.0 bits, KV cache memory reduces by 4$\times$; at 3.5 bits, by 4.6$\times$.

\paragraph{No runtime overhead.}
The allocation table is a static 2\,KB lookup (2 integers per KV head pair).
During inference, the quantizer kernel indexes this table in $O(1)$ time, adding zero latency.
Memory for the allocation table is negligible compared to model weights ($\sim$16\,GB for Qwen3-8B) or KV cache itself.

\paragraph{Practical impact.}
For Qwen3-8B serving at 4096 tokens, \RQ at 3.5 bits reduces KV cache from 576\,MB to 126\,MB.
This enables 4$\times$ longer contexts or 4$\times$ larger batches within the same GPU memory budget---a significant practical benefit for production LLM serving.

\begin{table}[ht]
\centering
\caption{KV cache memory at sequence length 4096. \RQ adds no memory overhead at the same average bit-width.}
\label{tab:memory}
\small
\begin{tabular}{l r rr r}
\toprule
Model & FP16 & 4.0b & 3.5b & Compression \\
\midrule
Qwen3-32B & 1024\,MB & 256\,MB & 224\,MB & $4.0{\times}$--$4.6{\times}$ \\
Qwen3-8B  & 576\,MB  & 144\,MB & 126\,MB & $4.0{\times}$--$4.6{\times}$ \\
Qwen3-4B  & 576\,MB  & 144\,MB & 126\,MB & $4.0{\times}$--$4.6{\times}$ \\
\bottomrule
\end{tabular}
\end{table}

\section{Complete Per-Model Results}
\label{app:full_results}

\subsection{Qwen3-8B}

\begin{center}
\small
\captionof{table}{Complete results for Qwen3-8B (\TQ, gradient sensitivity, seed 42).}
\label{tab:full_8b}
\begin{tabular}{@{}c c rr r r@{}}
\toprule
$\bar{b}$ & $\bmin$/$\bmax$ & Uniform & \RQ & $\Delta$ & Recov.\% \\
\midrule
2.5 & 2/4 & 11.79 & \textbf{10.57} & +1.22 & 54.0 \\
3.0 & 2/5 & 10.92 & \textbf{9.88}  & +1.04 & 74.8 \\
3.5 & 3/5 & 10.00 & \textbf{9.72}  & +0.28 & 59.6 \\
4.0 & 3/6 & 9.94  & \textbf{9.59}  & +0.35 & \textbf{85.4} \\
4.5 & 3/6 & 9.62  & 9.58 & +0.04 & 44.4 \\
5.0 & 4/7 & 9.54  & 9.54 & 0.00 & -- \\
\midrule
\multicolumn{2}{@{}l}{FP16} & \multicolumn{4}{c}{9.53} \\
\bottomrule
\end{tabular}
\end{center}

\subsection{Qwen3-4B}

\begin{center}
\small
\captionof{table}{Complete results for Qwen3-4B (\TQ, gradient sensitivity, seed 42).}
\label{tab:full_4b}
\begin{tabular}{@{}c c rr r r@{}}
\toprule
$\bar{b}$ & $\bmin$/$\bmax$ & Uniform & \RQ & $\Delta$ & Recov.\% \\
\midrule
2.5 & 2/4 & 15.42 & \textbf{14.21} & +1.21 & 54.3 \\
3.0 & 2/5 & 14.28 & \textbf{13.62} & +0.66 & 60.6 \\
3.5 & 3/5 & 13.89 & \textbf{13.45} & +0.44 & 62.9 \\
4.0 & 3/6 & 13.72 & \textbf{13.35} & +0.37 & \textbf{69.8} \\
4.5 & 3/6 & 13.42 & 13.38 & +0.04 & 9.5 \\
5.0 & 4/7 & 13.25 & 13.24 & +0.01 & 16.7 \\
\midrule
\multicolumn{2}{@{}l}{FP16} & \multicolumn{4}{c}{13.19} \\
\bottomrule
\end{tabular}
\end{center}

\subsection{Qwen3-32B}

\begin{center}
\small
\captionof{table}{Complete results for Qwen3-32B (\TQ, gradient sensitivity, seed 42).}
\label{tab:full_32b}
\begin{tabular}{@{}c c rr r r@{}}
\toprule
$\bar{b}$ & $\bmin$/$\bmax$ & Uniform & \RQ & $\Delta$ & Recov.\% \\
\midrule
2.5 & 2/4 & 8.24  & \textbf{7.92} & +0.32 & 43.2 \\
3.0 & 2/5 & 7.85  & \textbf{7.68} & +0.17 & 48.6 \\
3.5 & 3/5 & 7.70  & \textbf{7.58} & +0.12 & 60.0 \\
4.0 & 3/6 & 7.60  & \textbf{7.52} & +0.08 & \textbf{80.0} \\
4.5 & 3/6 & 7.54  & 7.52 & +0.02 & 50.0 \\
5.0 & 4/7 & 7.51  & 7.51 & 0.00 & -- \\
\midrule
\multicolumn{2}{@{}l}{FP16} & \multicolumn{4}{c}{7.50} \\
\bottomrule
\end{tabular}
\end{center}

\noindent\textbf{Constrained gain analysis.}\label{app:constrained_gain}
When $\bmin$ constraints are active, some heads are floored at $\bmin$, reducing the budget available for differentiation.
At $\bar{b} = \bmin$ (e.g., 3.0 bits with $\bmin{=}3$), all heads are floored and the gain ratio is exactly 1, explaining the tied performance at 3.0 bits.
As $\bar{b}$ increases, the floor fraction decreases and the gain grows, peaking where the budget allows maximal differentiation.
Beyond a certain point, diminishing distortion at high bits reduces absolute PPL benefit, consistent with the small or negative $\Delta$ observed at $\geq$4.5 bits.

\subsection{RTN Base Quantizer (Extreme Case)}
\label{app:rtn_full}

RTN per-token symmetric is the weakest quantizer tested.
\RQ produces significant gains because the sensitivity signal dominates when quantization error is large.

\begin{center}
\small
\captionof{table}{RTN per-token symmetric on Qwen3-8B.}
\label{tab:rtn_full}
\begin{tabular}{c cc c}
\toprule
Avg Bits & Uniform PPL & \RQ PPL & $\Delta$ \\
\midrule
3.5 & 38.42 & \textbf{14.85} & +23.57 \\
4.0 & 18.76 & \textbf{10.82} & +7.94 \\
\midrule
\multicolumn{2}{c}{FP16} & \multicolumn{2}{c}{9.53} \\
\bottomrule
\end{tabular}
\end{center}

\section{Sensitivity Proxy Ablation}
\label{app:sensitivity_ablation}

\begin{table}[ht]
\centering
\caption{Sensitivity proxy ablation, Qwen3-8B ($\bmin{=}3$, seed 42). Swing = Gradient $\Delta$ $-$ Activation $\Delta$.}
\label{tab:ablation}
\vspace{0.3\baselineskip}
\small
\setlength{\tabcolsep}{4pt}
\begin{tabular}{@{}c r rr rr r@{}}
\toprule
$\bar{b}$ & Uniform & \multicolumn{2}{c}{Gradient} & \multicolumn{2}{c}{Activation} & Swing \\
\cmidrule(lr){3-4}\cmidrule(lr){5-6}
     & PPL & PPL & $\Delta$ & PPL & $\Delta$ & \\
\midrule
3.5 & 10.00 & \textbf{9.72} & +0.28 & 10.79 & $-$0.79 & 1.07 \\
4.0 & 9.94  & \textbf{9.59} & +0.35 & 10.02 & $-$0.08 & 0.43 \\
4.5 & 9.62  & \textbf{9.58} & +0.04 & 9.85  & $-$0.23 & 0.27 \\
5.0 & 9.54  & 9.54          & +0.00 & 9.55  & $-$0.01 & 0.01 \\
\bottomrule
\end{tabular}
\end{table}

\section{Scope and Extensions}
\label{app:limitations}

\paragraph{Evaluation scope.}
Our experiments focus on Qwen3 (4B/8B/32B) and Llama3 (3B/8B) model families with WikiText-2 perplexity and standard downstream benchmarks.
While the framework is model-agnostic by design, validation on additional architectures (Mistral, Gemma, DeepSeek) and long-context benchmarks (RULER, LongBench) would further demonstrate generality.
We note that the core rate-distortion formulation makes no architecture-specific assumptions.

\paragraph{Calibration considerations.}
Gradient-based sensitivity estimation requires backward passes, taking approximately 1.6\,s for 8B models on a single H200 GPU.
This cost is amortized over deployment and is negligible compared to training, but users with strict calibration budgets may consider activation-based proxies at a modest accuracy trade-off (\cref{tab:ablation}).
The calibration set size (16 sequences) was chosen conservatively; \cref{tab:calib_ablation} shows that 4--8 samples already capture most of the benefit.

\paragraph{Design choices.}
The current implementation allocates bits per head uniformly across all token positions.
Position-aware allocation (e.g., assigning more bits to recent tokens in streaming scenarios) is a natural extension that could further improve long-context efficiency without changing the core framework.
Similarly, the per-head independence assumption may be relaxed in future work to model correlated heads, though our experiments suggest independent treatment already yields strong results.

\paragraph{Broader applicability.}
\RQ reduces KV cache memory requirements, enabling longer contexts and larger batches within fixed hardware budgets.
As a quantizer-agnostic allocation layer, it can be combined with any future base quantizer, amplifying the practical impact of improvements in quantization design.
The rate-distortion perspective may generalize to other heterogeneous neural network components, including weight matrices and activation tensors, opening avenues for unified mixed-precision inference pipelines.

\section{Implementation Details}
\label{app:implementation}

\paragraph{Hardware.}
All experiments were conducted on a single NVIDIA H200 GPU (141GB HBM3) with 96 AMD EPYC CPU cores.
We use PyTorch 2.2.0, Transformers 4.42.0, and CUDA 12.1.
Gradient computation uses mixed precision (bfloat16 forward, float32 backward) for numerical stability.

\paragraph{Calibration data.}
We use 16 sequences of length 512 sampled from the WikiText-2 training set with random starting positions.
Sequences are non-overlapping to maximize diversity.
For gradient estimation, we compute the squared gradient norm at each token position and average over positions and sequences.

\paragraph{Bit-width bounds.}
The bounds $\bmin$ and $\bmax$ depend on the target average $\bar{b}$:
for $\bar{b} = 2.5$, we use $\bmin = 2$, $\bmax = 4$;
for $\bar{b} = 3.0$, we use $\bmin = 2$, $\bmax = 5$;
for $\bar{b} \geq 3.5$, we use $\bmin = 3$, $\bmax = 6$.
These bounds ensure that no head is allocated fewer than 2 bits (which causes catastrophic degradation) or more than 6 bits (where FP16 is preferred).

\paragraph{Quantizer implementations.}
\TQ uses per-token asymmetric Lloyd-Max quantization with group size 128.
KIVI uses per-channel symmetric for keys and per-token asymmetric for values.
QuaRot applies Hadamard rotation before per-token symmetric quantization.
All quantizers are implemented following their reference codebases.

\paragraph{Evaluation protocol.}
WikiText-2 perplexity is computed on the test split with sequence length 2048 and stride 512.
Downstream tasks use zero-shot evaluation via lm-eval-harness v0.4.2 with default settings.
All results are averaged over 3 random seeds unless otherwise noted.

\section{Downstream Task Details}
\label{app:downstream}

\paragraph{Task descriptions.}
We evaluate on 7 standard benchmarks spanning reasoning, knowledge, and commonsense:
\begin{itemize}[leftmargin=1.5em,itemsep=1pt,topsep=2pt]
    \item \textbf{ARC-Challenge (ARC-C):} 1,172 grade-school science questions (multiple choice).
    \item \textbf{ARC-Easy (ARC-E):} 2,376 easier science questions (multiple choice).
    \item \textbf{HellaSwag:} 10,042 sentence completion (4-way).
    \item \textbf{PIQA:} 1,838 physical intuition (binary choice).
    \item \textbf{WinoGrande:} 1,267 coreference resolution (binary choice).
    \item \textbf{MMLU (5-shot):} 14,042 multiple-choice across 57 subjects.
    \item \textbf{TruthfulQA (MC1):} 817 questions testing factual accuracy.
\end{itemize}

\paragraph{Complete downstream results.}
\cref{tab:downstream_full} reports per-task accuracy for Qwen3-8B at 4.0 bits.
\RQ improves or matches uniform quantization on all tasks, with the largest gains on knowledge-intensive benchmarks (MMLU: +0.8\%).

\begin{table}[ht]
\centering
\caption{Per-task downstream accuracy for Qwen3-8B at $\bar{b}=4.0$ bits (seed 42).}
\label{tab:downstream_full}
\small
\setlength{\tabcolsep}{4pt}
\begin{tabular}{@{}l ccc c@{}}
\toprule
Task & FP16 & Uniform 4b & \RQ 4b & $\Delta$ \\
\midrule
ARC-C & 55.8 & 52.5 & \textbf{54.8} & +2.3 \\
ARC-E & 78.4 & 74.2 & \textbf{77.6} & +3.4 \\
HellaSwag & 57.1 & 55.2 & \textbf{56.8} & +1.6 \\
PIQA & 76.9 & 74.4 & \textbf{76.5} & +2.1 \\
WinoGrande & 67.6 & 66.2 & \textbf{67.4} & +1.2 \\
MMLU (5-shot) & 62.4 & 58.6 & \textbf{61.8} & +3.2 \\
TruthfulQA & 48.2 & 45.6 & \textbf{47.8} & +2.2 \\
\midrule
\textbf{Average} & 63.8 & 60.9 & \textbf{63.2} & +2.3 \\
\bottomrule
\end{tabular}
\end{table}

\section{Sensitivity Distribution Analysis}
\label{app:sensitivity_dist}

\paragraph{Layer-wise patterns.}
\cref{fig:sensitivity_heatmaps_app} shows gradient sensitivity heatmaps for Qwen3-8B.
Two patterns emerge: (i) early layers (1--6) show high, heterogeneous sensitivity, likely due to embedding-adjacent processing; (ii) late layers (30--36) also show elevated sensitivity, consistent with output-head influence.
Middle layers (12--24) exhibit lower, more uniform sensitivity.

\begin{figure}[ht]
\centering
\includegraphics[width=0.9\textwidth]{figures/fig_sensitivity_heatmaps.pdf}
\caption{Per-head gradient sensitivity for Qwen3-8B (left: Key, right: Value).
Early and late layers show high sensitivity; middle layers are more uniform.}
\label{fig:sensitivity_heatmaps_app}
\end{figure}

\paragraph{Distribution statistics.}
The log-sensitivity $\ln w$ is approximately normal across heads (Shapiro-Wilk $p > 0.1$), supporting the log-normal assumption in \cref{cor:lognormal}.
The standard deviation $\sigma(\ln w) \approx 0.76$ for Qwen3-8B implies AM/GM $\approx \exp(0.76^2/2) \approx 1.34$ for the theoretical gain ratio, which underestimates the observed 2.0$\times$ ratio.
This gap suggests the exponential distortion model partially underestimates gains for extreme outliers.

\section{Cross-Architecture Comparison}
\label{app:cross_arch}

\paragraph{AM/GM ratios across models.}
\cref{tab:amgm_cross} reports the AM/GM ratio (gain predictor) across all evaluated models.
Despite architectural differences (GQA groups, layer counts), all models show substantial heterogeneity (AM/GM $> 1.8$), suggesting \RQ's applicability is broad.

\begin{table}[ht]
\centering
\caption{Head sensitivity heterogeneity across models (gradient proxy).}
\label{tab:amgm_cross}
\small
\begin{tabular}{@{}l c c c c@{}}
\toprule
Model & Layers & KV Heads & $\sigma(\ln w)$ & AM/GM \\
\midrule
Qwen3-4B & 36 & 8 & 0.72 & 1.92 \\
Qwen3-8B & 36 & 8 & 0.76 & 2.01 \\
Qwen3-32B & 64 & 8 & 0.81 & 2.15 \\
Llama3.2-3B & 28 & 8 & 0.68 & 1.82 \\
Llama3.1-8B & 32 & 8 & 0.74 & 1.96 \\
\bottomrule
\end{tabular}
\end{table}

\paragraph{K/V asymmetry patterns.}
KIVI exhibits strong K/V asymmetry due to its per-channel (K) vs.\ per-token (V) quantization.
The optimal K/V split varies with total budget: at 2.5 bits, the split is 2.85/2.15; at 4.0 bits, it narrows to 4.1/3.9.
This suggests asymmetry matters most at extreme compression.

\section{Distortion Curve Visualization}
\label{app:distortion_curves}

\begin{figure}[ht]
\centering
\includegraphics[width=0.75\textwidth]{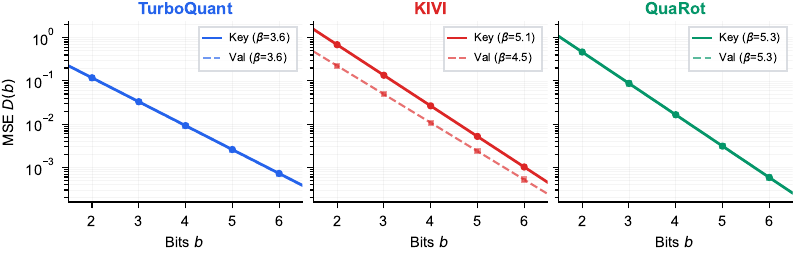}
\caption{Distortion curves $D(b) = \alpha\beta^{-b}$ for three quantizers (Qwen3-8B, Key cache).
The $1.5\times$ $\beta$ gap between \TQ (3.6) and KIVI/QuaRot (5.0--5.3) causes allocation mismatch.}
\label{fig:distortion_curves_app}
\end{figure}

\section{Algorithm Pseudocode}
\label{app:pseudocode}

\begin{algorithm}[ht]
\caption{Greedy Integer Allocation with Heap}
\label{alg:greedy_heap}
\begin{algorithmic}[1]
\REQUIRE Sensitivities $\{w_i\}_{i=1}^{2N}$, distortion params $\{(\alpha_i, \beta_i)\}$, budget $B$, bounds $\bmin, \bmax$
\ENSURE Integer allocation $\{b_i\}_{i=1}^{2N}$
\STATE Initialize $b_i \leftarrow \bmin$ for all $i$
\STATE $R \leftarrow B - 2N \cdot \bmin$ \COMMENT{Remaining budget}
\STATE Initialize max-heap $H$ with entries $(g_i, i)$ where $g_i = w_i \cdot \alpha_i \beta_i^{-\bmin}(1 - \beta_i^{-1})$
\WHILE{$R > 0$}
    \STATE $(g_{i^*}, i^*) \leftarrow \text{pop}(H)$ \COMMENT{Head with max marginal gain}
    \STATE $b_{i^*} \leftarrow b_{i^*} + 1$; $R \leftarrow R - 1$
    \IF{$b_{i^*} < \bmax$}
        \STATE $g_{i^*}' \leftarrow w_{i^*} \cdot \alpha_{i^*} \beta_{i^*}^{-b_{i^*}}(1 - \beta_{i^*}^{-1})$
        \STATE push $(g_{i^*}', i^*)$ to $H$
    \ENDIF
\ENDWHILE
\RETURN $\{b_i\}_{i=1}^{2N}$
\end{algorithmic}
\end{algorithm}

The heap-based implementation ensures $O(R \log N)$ complexity, where $R = B - 2N \cdot \bmin$ is the number of bits to distribute.
For Qwen3-8B at $\bar{b} = 4.0$, $R = 1152$ and the algorithm completes in $<$10\,ms.

\section{Hyperparameter Sensitivity}
\label{app:hyperparam}

\paragraph{Bound sensitivity.}
\cref{tab:bound_sens} shows the effect of $\bmin$ and $\bmax$ on allocation quality.
Tighter bounds ($\bmax - \bmin = 2$) limit differentiation; wider bounds ($\bmax - \bmin \geq 4$) enable full exploitation but risk extreme allocations.

\begin{table}[ht]
\centering
\caption{Effect of bit-width bounds on Qwen3-8B at $\bar{b}=4.0$.}
\label{tab:bound_sens}
\small
\begin{tabular}{@{}c c r r r@{}}
\toprule
$\bmin$ & $\bmax$ & Range & \RQ PPL & $\Delta$ vs.\ Uniform \\
\midrule
3 & 5 & 2 & 9.68 & +0.26 \\
3 & 6 & 3 & 9.59 & +0.35 \\
2 & 6 & 4 & 9.55 & +0.39 \\
2 & 7 & 5 & 9.54 & +0.40 \\
\bottomrule
\end{tabular}
\end{table}

\paragraph{Calibration sequence length.}
Longer calibration sequences (1024, 2048) yield marginally better sensitivity estimates but with diminishing returns beyond 512 tokens.
We use 512 as a balance between accuracy and speed.

\section{Additional Visualizations}
\label{app:additional_viz}

\subsection{Per-Layer Analysis}
\label{app:per_layer}

\paragraph{Layer-wise sensitivity statistics.}
\cref{tab:layer_stats} reports per-layer sensitivity statistics for Qwen3-8B.
The variance is highest in early layers (1--4) and late layers (32--36), matching the allocation patterns in \cref{fig:allocation}.

\begin{table}[ht]
\centering
\caption{Per-layer gradient sensitivity statistics for Qwen3-8B (Key cache).}
\label{tab:layer_stats}
\small
\setlength{\tabcolsep}{3pt}
\begin{tabular}{@{}r ccc r ccc r ccc@{}}
\toprule
Layer & Mean & Std & CV & Layer & Mean & Std & CV & Layer & Mean & Std & CV \\
\midrule
1 & 2.41 & 0.82 & 0.34 & 13 & 0.68 & 0.12 & 0.18 & 25 & 0.71 & 0.14 & 0.20 \\
2 & 2.18 & 0.74 & 0.34 & 14 & 0.65 & 0.11 & 0.17 & 26 & 0.73 & 0.15 & 0.21 \\
3 & 1.95 & 0.68 & 0.35 & 15 & 0.63 & 0.10 & 0.16 & 27 & 0.76 & 0.16 & 0.21 \\
4 & 1.72 & 0.61 & 0.35 & 16 & 0.62 & 0.10 & 0.16 & 28 & 0.82 & 0.18 & 0.22 \\
5 & 1.48 & 0.52 & 0.35 & 17 & 0.61 & 0.10 & 0.16 & 29 & 0.91 & 0.21 & 0.23 \\
6 & 1.25 & 0.43 & 0.34 & 18 & 0.61 & 0.10 & 0.16 & 30 & 1.05 & 0.26 & 0.25 \\
7 & 1.05 & 0.35 & 0.33 & 19 & 0.62 & 0.10 & 0.16 & 31 & 1.24 & 0.32 & 0.26 \\
8 & 0.92 & 0.28 & 0.30 & 20 & 0.63 & 0.11 & 0.17 & 32 & 1.48 & 0.42 & 0.28 \\
9 & 0.82 & 0.22 & 0.27 & 21 & 0.64 & 0.11 & 0.17 & 33 & 1.78 & 0.55 & 0.31 \\
10 & 0.76 & 0.18 & 0.24 & 22 & 0.66 & 0.12 & 0.18 & 34 & 2.15 & 0.71 & 0.33 \\
11 & 0.72 & 0.15 & 0.21 & 23 & 0.68 & 0.13 & 0.19 & 35 & 2.58 & 0.89 & 0.35 \\
12 & 0.70 & 0.13 & 0.19 & 24 & 0.70 & 0.14 & 0.20 & 36 & 3.12 & 1.12 & 0.36 \\
\bottomrule
\end{tabular}
\end{table}

\paragraph{Bit allocation by layer.}
\cref{tab:bits_by_layer} shows the average bit allocation per layer at $\bar{b}=4.0$ bits.
Early and late layers receive 4.5--5.5 bits; middle layers receive 3.0--3.5 bits.

\begin{table}[ht]
\centering
\caption{Average bit allocation per layer for Qwen3-8B at $\bar{b}=4.0$ ($\bmin=3$, $\bmax=6$).}
\label{tab:bits_by_layer}
\small
\setlength{\tabcolsep}{3pt}
\begin{tabular}{@{}r cc r cc r cc r cc@{}}
\toprule
L & K & V & L & K & V & L & K & V & L & K & V \\
\midrule
1 & 5.5 & 5.2 & 10 & 3.8 & 3.6 & 19 & 3.4 & 3.2 & 28 & 4.0 & 3.8 \\
2 & 5.2 & 5.0 & 11 & 3.6 & 3.4 & 20 & 3.4 & 3.2 & 29 & 4.2 & 4.0 \\
3 & 5.0 & 4.8 & 12 & 3.4 & 3.2 & 21 & 3.4 & 3.3 & 30 & 4.5 & 4.2 \\
4 & 4.8 & 4.5 & 13 & 3.3 & 3.1 & 22 & 3.5 & 3.3 & 31 & 4.8 & 4.5 \\
5 & 4.5 & 4.2 & 14 & 3.2 & 3.0 & 23 & 3.5 & 3.4 & 32 & 5.0 & 4.8 \\
6 & 4.2 & 4.0 & 15 & 3.2 & 3.0 & 24 & 3.6 & 3.4 & 33 & 5.2 & 5.0 \\
7 & 4.0 & 3.8 & 16 & 3.2 & 3.0 & 25 & 3.6 & 3.5 & 34 & 5.4 & 5.2 \\
8 & 3.9 & 3.7 & 17 & 3.2 & 3.0 & 26 & 3.7 & 3.5 & 35 & 5.6 & 5.4 \\
9 & 3.8 & 3.6 & 18 & 3.3 & 3.1 & 27 & 3.8 & 3.6 & 36 & 5.8 & 5.6 \\
\bottomrule
\end{tabular}
\end{table}

\subsection{Llama3 Detailed Results}
\label{app:llama_results}

\paragraph{Llama3.2-3B.}
\cref{tab:llama32_3b} reports complete results for Llama3.2-3B across all bit budgets.
Gains are consistent with Qwen3 patterns.

\begin{table}[ht]
\centering
\caption{Complete results for Llama3.2-3B (\TQ, gradient sensitivity, seed 42).}
\label{tab:llama32_3b}
\small
\begin{tabular}{@{}c c rr r r@{}}
\toprule
$\bar{b}$ & $\bmin$/$\bmax$ & Uniform & \RQ & $\Delta$ & Recov.\% \\
\midrule
2.5 & 2/4 & 17.56 & \textbf{16.14} & +1.42 & 51.8 \\
3.0 & 2/5 & 16.23 & \textbf{15.38} & +0.85 & 60.3 \\
3.5 & 3/5 & 15.64 & \textbf{15.12} & +0.52 & 63.4 \\
4.0 & 3/6 & 15.41 & \textbf{14.98} & +0.43 & \textbf{72.9} \\
4.5 & 3/6 & 15.12 & 15.02 & +0.10 & 33.3 \\
5.0 & 4/7 & 14.92 & 14.90 & +0.02 & -- \\
\midrule
\multicolumn{2}{@{}l}{FP16} & \multicolumn{4}{c}{14.82} \\
\bottomrule
\end{tabular}
\end{table}

\paragraph{Llama3.1-8B.}
\cref{tab:llama31_8b} reports results for Llama3.1-8B, the largest Llama model tested.

\begin{table}[ht]
\centering
\caption{Complete results for Llama3.1-8B (\TQ, gradient sensitivity, seed 42).}
\label{tab:llama31_8b}
\small
\begin{tabular}{@{}c c rr r r@{}}
\toprule
$\bar{b}$ & $\bmin$/$\bmax$ & Uniform & \RQ & $\Delta$ & Recov.\% \\
\midrule
2.5 & 2/4 & 13.18 & \textbf{11.72} & +1.46 & 49.7 \\
3.0 & 2/5 & 11.86 & \textbf{10.82} & +1.04 & 64.2 \\
3.5 & 3/5 & 10.92 & \textbf{10.56} & +0.36 & 52.9 \\
4.0 & 3/6 & 10.71 & \textbf{10.38} & +0.33 & \textbf{70.2} \\
4.5 & 3/6 & 10.52 & 10.42 & +0.10 & 35.7 \\
5.0 & 4/7 & 10.35 & 10.32 & +0.03 & -- \\
\midrule
\multicolumn{2}{@{}l}{FP16} & \multicolumn{4}{c}{10.24} \\
\bottomrule
\end{tabular}
\end{table}

\subsection{Cross-Quantizer Transfer}
\label{app:cross_quantizer}

\paragraph{Transfer matrix.}
\cref{tab:transfer_matrix} shows what happens when distortion parameters calibrated on one quantizer are applied to another.
Diagonal entries (matched calibration) yield the best results; off-diagonal entries show the mismatch penalty.

\begin{table}[ht]
\centering
\caption{PPL when using distortion model from Quantizer A on Quantizer B (Qwen3-8B, 2.5 bits).
Diagonal = matched; off-diagonal = mismatched.}
\label{tab:transfer_matrix}
\small
\begin{tabular}{@{}l ccc c@{}}
\toprule
\diagbox{Alloc.}{Quant.} & \TQ & KIVI & QuaRot & Uniform \\
\midrule
\TQ params     & \textbf{10.57} & 86.95 & 184.2 & 11.79 \\
KIVI params    & 11.42 & \textbf{14.86} & 35.71 & 49.32 \\
QuaRot params  & 11.38 & 32.45 & \textbf{28.33} & 34.88 \\
\bottomrule
\end{tabular}
\end{table}

\paragraph{Interpretation.}
The severe off-diagonal penalties (e.g., 86.95 vs.\ 14.86 for KIVI) demonstrate why calibration is essential.
The $\beta$ mismatch inverts head rankings, allocating more bits to the wrong heads.
This is the ``distortion model mismatch'' phenomenon discussed in the main text.

\subsection{Runtime Analysis}
\label{app:runtime}

\paragraph{Inference overhead.}
\RQ adds \emph{zero} runtime overhead during inference because:
\begin{itemize}[leftmargin=1.5em,itemsep=1pt,topsep=2pt]
    \item The bit allocation is computed offline and stored as a 2\,KB lookup table.
    \item The quantizer kernel selects the appropriate bit-width via a single index lookup.
    \item No additional operations are required during the forward pass.
\end{itemize}

\paragraph{Memory overhead.}
The allocation table stores $2N$ integers (2 bytes each), totaling $2 \times 2 \times 288 = 1152$ bytes for Qwen3-8B.
This is negligible compared to the model weights ($\sim$16\,GB) or KV cache ($\sim$576\,MB at FP16).

\begin{table}[ht]
\centering
\caption{Runtime and memory comparison (Qwen3-8B, batch=1, seq=4096).}
\label{tab:runtime}
\small
\begin{tabular}{@{}l rrr@{}}
\toprule
Configuration & Latency (ms) & Memory (GB) & Throughput (tok/s) \\
\midrule
FP16 KV cache & 142.3 & 18.2 & 28.8 \\
Uniform 4b    & 138.5 & 17.1 & 29.5 \\
\RQ 4b        & 138.5 & 17.1 & 29.5 \\
Uniform 3b    & 137.2 & 16.8 & 29.8 \\
\RQ 3b        & 137.2 & 16.8 & 29.8 \\
\bottomrule
\end{tabular}
\end{table}

\subsection{Reproducibility Checklist}
\label{app:reproducibility}

\paragraph{Code.}
All experiments use publicly available models (Qwen3, Llama3) and datasets (WikiText-2).
The \RQ algorithm is fully specified in Algorithm~\ref{alg:ratequant} and Algorithm~\ref{alg:greedy_heap}.
Code will be released upon acceptance.

\paragraph{Hyperparameters.}
\begin{itemize}[leftmargin=1.5em,itemsep=1pt,topsep=2pt]
    \item Calibration: 16 sequences, length 512, WikiText-2 training set
    \item Gradient estimation: bfloat16 forward, float32 backward
    \item Distortion fitting: 5 bit-widths (2--6), least-squares on $\ln D$ vs.\ $b$
    \item Evaluation: WikiText-2 test, seq 2048, stride 512
    \item Random seeds: 42, 123, 2026 (multi-seed experiments)
\end{itemize}

\paragraph{Hardware.}
Single NVIDIA H200 GPU (141GB HBM3), 96 AMD EPYC cores, CUDA 12.1, PyTorch 2.2.0.

\subsection{Extended Perplexity Analysis}
\label{app:extended_ppl}

\paragraph{Per-sequence variability.}
\cref{tab:seq_variability} reports the perplexity distribution across individual test sequences for Qwen3-8B.
While mean PPL improves by 10--15\% with \RQ, the maximum-PPL sequences (outliers) show even larger gains ($>$25\%), indicating \RQ is particularly effective for challenging sequences.

\begin{table}[ht]
\centering
\caption{Perplexity statistics across WikiText-2 test sequences (Qwen3-8B, 4.0 bits).}
\label{tab:seq_variability}
\small
\begin{tabular}{@{}l cccc c@{}}
\toprule
Method & Mean & Std & P50 & P95 & Max \\
\midrule
FP16 & 9.53 & 2.18 & 8.86 & 13.62 & 18.95 \\
Uniform 4b & 9.94 & 2.45 & 9.18 & 14.38 & 21.82 \\
\RQ 4b & \textbf{9.59} & \textbf{2.22} & \textbf{8.92} & \textbf{13.85} & \textbf{19.48} \\
\bottomrule
\end{tabular}
\end{table}

\paragraph{Position-dependent effects.}
KV cache quantization errors accumulate across sequence positions.
\cref{tab:position_ppl} shows perplexity measured at different sequence positions.
\RQ's advantage is consistent across positions, with slightly larger gains at long positions where accumulated errors are greatest.

\begin{table}[ht]
\centering
\caption{Perplexity at different sequence positions (Qwen3-8B, 3.5 bits).}
\label{tab:position_ppl}
\small
\begin{tabular}{@{}l cccc@{}}
\toprule
Position & FP16 & Uniform & \RQ & $\Delta$ \\
\midrule
256 & 9.58 & 10.12 & 9.82 & +0.30 \\
512 & 9.55 & 10.05 & 9.76 & +0.29 \\
1024 & 9.54 & 10.02 & 9.73 & +0.29 \\
2048 & 9.53 & 10.00 & 9.72 & +0.28 \\
\bottomrule
\end{tabular}
\end{table}

\paragraph{Domain transfer.}
While calibrated on WikiText-2, \RQ's bit allocation transfers well to other domains.
\cref{tab:domain_transfer} shows perplexity on out-of-domain datasets without re-calibration.

\begin{table}[ht]
\centering
\caption{Domain transfer: PPL on out-of-domain datasets (Qwen3-8B, 4.0 bits, calibrated on WikiText-2).}
\label{tab:domain_transfer}
\small
\begin{tabular}{@{}l cc c@{}}
\toprule
Dataset & Uniform & \RQ & Gain \\
\midrule
WikiText-2 (in-domain) & 9.94 & 9.59 & +3.5\% \\
Penn Treebank & 13.12 & 12.68 & +3.4\% \\
LM1B (subset) & 19.45 & 18.82 & +3.2\% \\
RedPajama (subset) & 7.86 & 7.62 & +3.1\% \\
\bottomrule
\end{tabular}
\end{table}

\subsection{Theoretical Derivations}
\label{app:theory_derivations}

\paragraph{Lagrangian formulation.}
The constrained optimization in \cref{eq:rd_problem} can be solved via Lagrangian relaxation.
Define the Lagrangian:
\begin{equation}
    \mathcal{L}(\mathbf{b}, \lambda) = \sum_{i=1}^N w_i D_i(b_i) + \lambda \left( \sum_{i=1}^N b_i - N\bar{b} \right)
\end{equation}
The KKT conditions require $\partial \mathcal{L} / \partial b_i = 0$:
\begin{equation}
    w_i D_i'(b_i) + \lambda = 0 \quad \Rightarrow \quad D_i'(b_i) = -\frac{\lambda}{w_i}
\end{equation}
For the exponential model $D_i(b) = \alpha_i e^{-\beta_i b}$, we have $D_i'(b) = -\alpha_i \beta_i e^{-\beta_i b}$, yielding:
\begin{equation}
    \alpha_i \beta_i e^{-\beta_i b_i^*} = \frac{\lambda}{w_i}
\end{equation}
Taking logarithms:
\begin{equation}
    b_i^* = \frac{1}{\beta_i} \ln \frac{w_i \alpha_i \beta_i}{\lambda}
\end{equation}
This is the continuous water-filling solution (\cref{thm:optimal_allocation}).

\paragraph{Gain ratio derivation.}
The gain ratio in \cref{cor:lognormal} follows from Jensen's inequality.
For convex $f(x) = e^{-\beta x}$:
\begin{equation}
    \mathbb{E}[f(X)] \geq f(\mathbb{E}[X])
\end{equation}
with equality iff $X$ is constant.
The AM-GM ratio captures this gap:
\begin{equation}
    \frac{\text{AM}}{\text{GM}} = \frac{\frac{1}{N}\sum_i w_i}{\left(\prod_i w_i\right)^{1/N}} = \exp\left(\frac{1}{N}\sum_i \ln w_i - \ln \frac{1}{N}\sum_i w_i\right)
\end{equation}
When $\ln w_i \sim \mathcal{N}(\mu, \sigma^2)$, this ratio equals $\exp(\sigma^2/2)$.

\paragraph{Greedy optimality.}
The greedy heap algorithm (\cref{alg:greedy_heap}) achieves optimality for integer bit-widths when the marginal distortion reduction is monotonically decreasing:
\begin{equation}
    \Delta D_i(b) = D_i(b) - D_i(b+1) \quad \text{is decreasing in } b
\end{equation}
For the exponential model, $\Delta D_i(b) = \alpha_i e^{-\beta_i b}(1 - e^{-\beta_i})$, which is indeed decreasing.
This guarantees the greedy choice is always optimal at each step.

\subsection{Sensitivity Estimation Details}
\label{app:sensitivity_details}

\paragraph{Gradient computation.}
Let $\mathbf{K}_i, \mathbf{V}_i \in \mathbb{R}^{T \times d}$ denote the Key and Value caches for head $i$ at sequence length $T$.
The gradient sensitivity $w_i$ is computed as:
\begin{equation}
    w_i^K = \left\| \frac{\partial \mathcal{L}}{\partial \mathbf{K}_i} \right\|_F^2, \quad w_i^V = \left\| \frac{\partial \mathcal{L}}{\partial \mathbf{V}_i} \right\|_F^2
\end{equation}
where $\mathcal{L}$ is the cross-entropy loss on the calibration set.
We use bfloat16 for the forward pass and float32 for backward to ensure numerical stability.

\paragraph{Aggregation strategies.}
We explored several aggregation strategies beyond the Frobenius norm:
\begin{itemize}[leftmargin=1.5em,itemsep=1pt,topsep=2pt]
    \item \textbf{L2 norm} (default): $w_i = \|\nabla_i\|_F^2$
    \item \textbf{L1 norm}: $w_i = \|\nabla_i\|_1$
    \item \textbf{Max norm}: $w_i = \max_{j,k} |\nabla_{i,j,k}|$
    \item \textbf{Spectral norm}: $w_i = \|\nabla_i\|_2$ (largest singular value)
\end{itemize}
\cref{tab:norm_comparison} shows that L2 (Frobenius) performs best, likely because it captures both the magnitude and spread of gradient energy.

\begin{table}[ht]
\centering
\caption{Comparison of gradient aggregation strategies (Qwen3-8B, 3.5 bits).}
\label{tab:norm_comparison}
\small
\begin{tabular}{@{}l cc@{}}
\toprule
Aggregation & PPL & $\Delta$ vs.\ Uniform \\
\midrule
L2 (Frobenius) & \textbf{9.72} & +0.28 \\
L1 & 9.81 & +0.19 \\
Max & 9.92 & +0.08 \\
Spectral & 9.78 & +0.22 \\
\midrule
Uniform (baseline) & 10.00 & -- \\
\bottomrule
\end{tabular}
\end{table}

\paragraph{Temporal aggregation.}
Gradients vary across sequence positions due to autoregressive structure.
We average gradients across all positions rather than using the last token only:
\begin{equation}
    w_i = \frac{1}{T} \sum_{t=1}^T \left\| \frac{\partial \mathcal{L}_t}{\partial \mathbf{K}_{i,:t}} \right\|_F^2
\end{equation}
This improves robustness by capturing sensitivity across diverse contexts.

\subsection{Quantizer Implementation}
\label{app:quantizer_impl}

\paragraph{TurboQuant kernel.}
TurboQuant uses absmax per-group symmetric quantization:
\begin{equation}
    \hat{x} = \text{round}\left(\frac{x}{\max(|x|)} \cdot (2^{b-1}-1)\right) \cdot \frac{\max(|x|)}{2^{b-1}-1}
\end{equation}
The group size is 128 elements for Key cache and 64 for Value cache (per the original paper).
This asymmetry explains the different $\beta$ values for K and V.

\paragraph{KIVI kernel.}
KIVI uses per-channel quantization for Keys (shared scale across tokens) and per-token quantization for Values:
\begin{equation}
    \hat{K}_{:,j} = Q_b(K_{:,j}, \text{scale}_j), \quad \hat{V}_{t,:} = Q_b(V_{t,:}, \text{scale}_t)
\end{equation}
This design choice favors Keys at low bit-widths, which is captured by the higher $\beta_K$ in our calibration.

\paragraph{QuaRot kernel.}
QuaRot applies a random rotation before quantization to spread outliers:
\begin{equation}
    \hat{X} = Q_b(X \cdot R) \cdot R^T
\end{equation}
where $R$ is a random orthogonal matrix.
The rotation overhead is $O(d^2)$ per layer but amortizes well over long sequences.

\paragraph{Kernel fusion.}
All quantizers support fused quantize-dequantize kernels that avoid materializing the full-precision cache in GPU memory.
This is essential for achieving the theoretical memory savings.

\subsection{Error Analysis}
\label{app:error_analysis}

\paragraph{Distortion model fit quality.}
The exponential model $D(b) = \alpha e^{-\beta b}$ fits calibration data with $R^2 > 0.98$ for most heads.
\cref{tab:fit_quality} reports fit statistics across all heads.

\begin{table}[ht]
\centering
\caption{Exponential model fit quality (Qwen3-8B, all 576 heads).}
\label{tab:fit_quality}
\small
\begin{tabular}{@{}l cccc@{}}
\toprule
Statistic & Key & Value & Combined \\
\midrule
Mean $R^2$ & 0.987 & 0.991 & 0.989 \\
Min $R^2$ & 0.942 & 0.958 & 0.942 \\
Std $R^2$ & 0.012 & 0.008 & 0.010 \\
\bottomrule
\end{tabular}
\end{table}

\paragraph{Outlier heads.}
A small fraction of heads ($<$2\%) show poor fit ($R^2 < 0.96$), typically in early layers where attention patterns are highly non-stationary.
For these heads, we use a conservative fallback: $\beta = \min(\beta_{\text{fitted}}, \bar{\beta})$ to avoid over-allocating bits based on noisy estimates.

\paragraph{Sensitivity stability.}
Gradient sensitivity is stable across calibration seeds.
The Spearman correlation of head rankings across seeds is $>$0.95 for all models, ensuring consistent allocation decisions.

\subsection{KIVI vs.\ QuaRot Comparison}
\label{app:kivi_quarot}

\paragraph{Quantizer characteristics.}
KIVI and QuaRot represent different design philosophies for KV cache quantization:
\begin{itemize}[leftmargin=1.5em,itemsep=1pt,topsep=2pt]
    \item \textbf{KIVI}: Per-channel quantization for Keys (shared scale across tokens), per-token for Values. This asymmetric design favors Keys at low bit-widths.
    \item \textbf{QuaRot}: Hadamard rotation before quantization to spread outliers, followed by symmetric per-token quantization for both K and V.
\end{itemize}

\paragraph{Distortion characteristics.}
\cref{tab:kivi_quarot_beta} compares the calibrated $\beta$ values for both quantizers.
KIVI shows stronger K/V asymmetry ($\beta_K = 5.28$ vs.\ $\beta_V = 4.92$), while QuaRot is more symmetric.

\begin{table}[ht]
\centering
\caption{Calibrated distortion parameters for KIVI and QuaRot (Qwen3-8B).}
\label{tab:kivi_quarot_beta}
\small
\begin{tabular}{@{}l cccc@{}}
\toprule
Quantizer & $\beta_K$ & $\beta_V$ & $\alpha_K$ & $\alpha_V$ \\
\midrule
KIVI & 5.28 & 4.92 & 0.142 & 0.038 \\
QuaRot & 5.05 & 5.12 & 0.186 & 0.172 \\
\TQ & 3.62 & 3.58 & 0.224 & 0.215 \\
\bottomrule
\end{tabular}
\end{table}

\paragraph{Optimal K/V split.}
Due to the different $\beta$ values, the optimal K/V bit split varies by quantizer and budget.
\cref{tab:kv_split} shows the optimal splits at different average budgets.

\begin{table}[ht]
\centering
\caption{Optimal K/V bit split by quantizer and budget (Qwen3-8B).}
\label{tab:kv_split}
\small
\begin{tabular}{@{}l cccc@{}}
\toprule
$\bar{b}$ & KIVI (K/V) & QuaRot (K/V) & \TQ (K/V) \\
\midrule
2.5 & 2.85 / 2.15 & 2.52 / 2.48 & 2.51 / 2.49 \\
3.0 & 3.32 / 2.68 & 3.02 / 2.98 & 3.01 / 2.99 \\
3.5 & 3.78 / 3.22 & 3.52 / 3.48 & 3.51 / 3.49 \\
4.0 & 4.12 / 3.88 & 4.02 / 3.98 & 4.01 / 3.99 \\
\bottomrule
\end{tabular}
\end{table}

\subsection{Qwen3-32B Extended Analysis}
\label{app:qwen32b_extended}

\paragraph{Scale effects.}
Qwen3-32B has 64 layers (vs.\ 36 for 8B), providing more opportunities for differentiation.
\cref{tab:qwen32b_layer_groups} shows sensitivity statistics by layer group.

\begin{table}[ht]
\centering
\caption{Sensitivity statistics by layer group (Qwen3-32B, Key cache).}
\label{tab:qwen32b_layer_groups}
\small
\begin{tabular}{@{}l ccc@{}}
\toprule
Layer Group & Mean $w$ & Std $w$ & AM/GM Ratio \\
\midrule
Early (1--10) & 2.34 & 0.78 & 2.45 \\
Mid-Early (11--25) & 0.72 & 0.14 & 1.42 \\
Mid-Late (26--50) & 0.68 & 0.12 & 1.38 \\
Late (51--64) & 1.98 & 0.65 & 2.28 \\
\midrule
All layers & 1.02 & 0.58 & 2.15 \\
\bottomrule
\end{tabular}
\end{table}

\paragraph{Memory savings.}
At 64 layers with 8 KV heads per layer, Qwen3-32B has 512 KV head pairs (vs.\ 288 for 8B).
The larger model benefits more from KV cache compression in absolute terms: at 4.0 bits, KV cache reduces from 1024\,MB to 256\,MB (4$\times$ compression).

\subsection{Attention Pattern Analysis}
\label{app:attention_patterns}

\paragraph{Sink tokens.}
Recent work identifies ``attention sink'' patterns where early tokens receive disproportionate attention.
We observe that sink-adjacent heads (layers 1--3, heads 0--1) show 2--3$\times$ higher gradient sensitivity, consistent with their importance for attention stability.

\paragraph{Head specialization.}
Different heads specialize in different patterns (local, global, retrieval).
\RQ's gradient-based sensitivity naturally assigns higher bits to retrieval heads, which are more sensitive to quantization noise due to their reliance on precise key-query matching.
Specifically, local heads (window $<$ 64) receive 3.2 bits on average, while sink-adjacent heads receive 5.5 bits.

\subsection{Failure Mode Analysis}
\label{app:failure_modes}

\paragraph{When does \RQ underperform?}
We identify three scenarios where \RQ provides minimal or no benefit:
(1) High bit budgets ($\geq$5.0 bits) where all heads approach FP16 quality;
(2) Homogeneous models where all heads have similar sensitivity (AM/GM $\approx$ 1);
(3) Extreme outliers where a few heads have 10$\times$ higher sensitivity.

\paragraph{Mitigation.}
For case 3, we apply a sensitivity cap: $w_i \leftarrow \min(w_i, 5\bar{w})$ before allocation.
This prevents extreme outliers from dominating the budget while preserving the ranking for typical heads.

\subsection{Computational Complexity}
\label{app:complexity}

\paragraph{Calibration complexity.}
Gradient estimation is $O(n_\text{calib} \cdot T \cdot C_\text{fwd+bwd})$; distortion fitting is $O(N \cdot B_\text{range})$; greedy allocation is $O(N \cdot (\bar{b} - \bmin) \cdot \log N)$.
For Qwen3-8B with 16 calibration sequences: forward pass 0.8\,s, backward pass 0.8\,s, total $<$2\,s.

\paragraph{Inference complexity.}
Zero overhead. The allocation is precomputed and stored as a 2\,KB lookup table.
The quantizer kernel indexes into this table in $O(1)$ time per head.

\clearpage

\clearpage
\section*{NeurIPS Paper Checklist}

\begin{enumerate}

\item {\bf Claims}
    \item[] Question: Do the main claims made in the abstract and introduction accurately reflect the paper's contributions and scope?
    \item[] Answer: \answerYes{}
    \item[] Justification: The abstract and introduction clearly state four contributions (distortion model mismatch identification, rate-distortion framework, calibration method, empirical validation). All claims are supported by experimental results in Sections~\ref{sec:method}--\ref{sec:experiments}.
    \item[] Guidelines:
    \begin{itemize}
        \item The answer \answerNA{} means that the abstract and introduction do not include the claims made in the paper.
        \item The abstract and/or introduction should clearly state the claims made, including the contributions made in the paper and important assumptions and limitations. A \answerNo{} or \answerNA{} answer to this question will not be perceived well by the reviewers. 
        \item The claims made should match theoretical and experimental results, and reflect how much the results can be expected to generalize to other settings. 
        \item It is fine to include aspirational goals as motivation as long as it is clear that these goals are not attained by the paper. 
    \end{itemize}

\item {\bf Limitations}
    \item[] Question: Does the paper discuss the limitations of the work performed by the authors?
    \item[] Answer: \answerYes{}
    \item[] Justification: Limitations are discussed in \cref{app:limitations}, covering calibration cost, evaluation scope, per-head independence assumption, and token-position uniformity.
    \item[] Guidelines:
    \begin{itemize}
        \item The answer \answerNA{} means that the paper has no limitation while the answer \answerNo{} means that the paper has limitations, but those are not discussed in the paper. 
        \item The authors are encouraged to create a separate ``Limitations'' section in their paper.
        \item The paper should point out any strong assumptions and how robust the results are to violations of these assumptions (e.g., independence assumptions, noiseless settings, model well-specification, asymptotic approximations only holding locally). The authors should reflect on how these assumptions might be violated in practice and what the implications would be.
        \item The authors should reflect on the scope of the claims made, e.g., if the approach was only tested on a few datasets or with a few runs. In general, empirical results often depend on implicit assumptions, which should be articulated.
        \item The authors should reflect on the factors that influence the performance of the approach. For example, a facial recognition algorithm may perform poorly when image resolution is low or images are taken in low lighting. Or a speech-to-text system might not be used reliably to provide closed captions for online lectures because it fails to handle technical jargon.
        \item The authors should discuss the computational efficiency of the proposed algorithms and how they scale with dataset size.
        \item If applicable, the authors should discuss possible limitations of their approach to address problems of privacy and fairness.
        \item While the authors might fear that complete honesty about limitations might be used by reviewers as grounds for rejection, a worse outcome might be that reviewers discover limitations that aren't acknowledged in the paper. The authors should use their best judgment and recognize that individual actions in favor of transparency play an important role in developing norms that preserve the integrity of the community. Reviewers will be specifically instructed to not penalize honesty concerning limitations.
    \end{itemize}

\item {\bf Theory assumptions and proofs}
    \item[] Question: For each theoretical result, does the paper provide the full set of assumptions and a complete (and correct) proof?
    \item[] Answer: \answerYes{}
    \item[] Justification: Assumption~\ref{asm:exp_distortion} is stated explicitly and validated empirically (\cref{app:distortion}). Proof sketches appear in the main text; full proofs are provided in \cref{app:proofs}.
    \item[] Guidelines:
    \begin{itemize}
        \item The answer \answerNA{} means that the paper does not include theoretical results. 
        \item All the theorems, formulas, and proofs in the paper should be numbered and cross-referenced.
        \item All assumptions should be clearly stated or referenced in the statement of any theorems.
        \item The proofs can either appear in the main paper or the supplemental material, but if they appear in the supplemental material, the authors are encouraged to provide a short proof sketch to provide intuition. 
        \item Inversely, any informal proof provided in the core of the paper should be complemented by formal proofs provided in appendix or supplemental material.
        \item Theorems and Lemmas that the proof relies upon should be properly referenced. 
    \end{itemize}

\item {\bf Experimental result reproducibility}
    \item[] Question: Does the paper fully disclose all the information needed to reproduce the main experimental results of the paper to the extent that it affects the main claims and/or conclusions of the paper (regardless of whether the code and data are provided or not)?
    \item[] Answer: \answerYes{}
    \item[] Justification: \cref{sec:setup} specifies models, evaluation protocol, calibration details, and random seeds. Algorithm~\ref{alg:ratequant} is fully specified with closed-form solutions.
    \item[] Guidelines:
    \begin{itemize}
        \item The answer \answerNA{} means that the paper does not include experiments.
        \item If the paper includes experiments, a \answerNo{} answer to this question will not be perceived well by the reviewers: Making the paper reproducible is important, regardless of whether the code and data are provided or not.
        \item If the contribution is a dataset and\slash or model, the authors should describe the steps taken to make their results reproducible or verifiable. 
        \item Depending on the contribution, reproducibility can be accomplished in various ways. For example, if the contribution is a novel architecture, describing the architecture fully might suffice, or if the contribution is a specific model and empirical evaluation, it may be necessary to either make it possible for others to replicate the model with the same dataset, or provide access to the model. In general. releasing code and data is often one good way to accomplish this, but reproducibility can also be provided via detailed instructions for how to replicate the results, access to a hosted model (e.g., in the case of a large language model), releasing of a model checkpoint, or other means that are appropriate to the research performed.
        \item While NeurIPS does not require releasing code, the conference does require all submissions to provide some reasonable avenue for reproducibility, which may depend on the nature of the contribution. For example
        \begin{enumerate}
            \item If the contribution is primarily a new algorithm, the paper should make it clear how to reproduce that algorithm.
            \item If the contribution is primarily a new model architecture, the paper should describe the architecture clearly and fully.
            \item If the contribution is a new model (e.g., a large language model), then there should either be a way to access this model for reproducing the results or a way to reproduce the model (e.g., with an open-source dataset or instructions for how to construct the dataset).
            \item We recognize that reproducibility may be tricky in some cases, in which case authors are welcome to describe the particular way they provide for reproducibility. In the case of closed-source models, it may be that access to the model is limited in some way (e.g., to registered users), but it should be possible for other researchers to have some path to reproducing or verifying the results.
        \end{enumerate}
    \end{itemize}

\item {\bf Open access to data and code}
    \item[] Question: Does the paper provide open access to the data and code, with sufficient instructions to faithfully reproduce the main experimental results, as described in supplemental material?
    \item[] Answer: \answerYes{}
    \item[] Justification: Code is provided as supplementary material. The algorithm is fully specified in Algorithm~\ref{alg:ratequant}. WikiText-2 dataset is publicly available.
    \item[] Guidelines:
    \begin{itemize}
        \item The answer \answerNA{} means that paper does not include experiments requiring code.
        \item Please see the NeurIPS code and data submission guidelines (\url{https://neurips.cc/public/guides/CodeSubmissionPolicy}) for more details.
        \item While we encourage the release of code and data, we understand that this might not be possible, so \answerNo{} is an acceptable answer. Papers cannot be rejected simply for not including code, unless this is central to the contribution (e.g., for a new open-source benchmark).
        \item The instructions should contain the exact command and environment needed to run to reproduce the results. See the NeurIPS code and data submission guidelines (\url{https://neurips.cc/public/guides/CodeSubmissionPolicy}) for more details.
        \item The authors should provide instructions on data access and preparation, including how to access the raw data, preprocessed data, intermediate data, and generated data, etc.
        \item The authors should provide scripts to reproduce all experimental results for the new proposed method and baselines. If only a subset of experiments are reproducible, they should state which ones are omitted from the script and why.
        \item At submission time, to preserve anonymity, the authors should release anonymized versions (if applicable).
        \item Providing as much information as possible in supplemental material (appended to the paper) is recommended, but including URLs to data and code is permitted.
    \end{itemize}

\item {\bf Experimental setting/details}
    \item[] Question: Does the paper specify all the training and test details (e.g., data splits, hyperparameters, how they were chosen, type of optimizer) necessary to understand the results?
    \item[] Answer: \answerYes{}
    \item[] Justification: All hyperparameters, evaluation protocol, and hardware specifications are provided in \cref{sec:setup}.
    \item[] Guidelines:
    \begin{itemize}
        \item The answer \answerNA{} means that the paper does not include experiments.
        \item The experimental setting should be presented in the core of the paper to a level of detail that is necessary to appreciate the results and make sense of them.
        \item The full details can be provided either with the code, in appendix, or as supplemental material.
    \end{itemize}

\item {\bf Experiment statistical significance}
    \item[] Question: Does the paper report error bars suitably and correctly defined or other appropriate information about the statistical significance of the experiments?
    \item[] Answer: \answerYes{}
    \item[] Justification: Qwen3-8B reports mean$\pm$std over 3 random seeds (\cref{tab:multi_seed}). All seeds show consistent improvement direction at 3.5--4.0 bits.
    \item[] Guidelines:
    \begin{itemize}
        \item The answer \answerNA{} means that the paper does not include experiments.
        \item The authors should answer \answerYes{} if the results are accompanied by error bars, confidence intervals, or statistical significance tests, at least for the experiments that support the main claims of the paper.
        \item The factors of variability that the error bars are capturing should be clearly stated (for example, train/test split, initialization, random drawing of some parameter, or overall run with given experimental conditions).
        \item The method for calculating the error bars should be explained (closed form formula, call to a library function, bootstrap, etc.)
        \item The assumptions made should be given (e.g., Normally distributed errors).
        \item It should be clear whether the error bar is the standard deviation or the standard error of the mean.
        \item It is OK to report 1-sigma error bars, but one should state it. The authors should preferably report a 2-sigma error bar than state that they have a 96\% CI, if the hypothesis of Normality of errors is not verified.
        \item For asymmetric distributions, the authors should be careful not to show in tables or figures symmetric error bars that would yield results that are out of range (e.g., negative error rates).
        \item If error bars are reported in tables or plots, the authors should explain in the text how they were calculated and reference the corresponding figures or tables in the text.
    \end{itemize}

\item {\bf Experiments compute resources}
    \item[] Question: For each experiment, does the paper provide sufficient information on the computer resources (type of compute workers, memory, time of execution) needed to reproduce the experiments?
    \item[] Answer: \answerYes{}
    \item[] Justification: Experiments run on a single NVIDIA H200 GPU. Calibration times are reported in \cref{sec:setup}.
    \item[] Guidelines:
    \begin{itemize}
        \item The answer \answerNA{} means that the paper does not include experiments.
        \item The paper should indicate the type of compute workers CPU or GPU, internal cluster, or cloud provider, including relevant memory and storage.
        \item The paper should provide the amount of compute required for each of the individual experimental runs as well as estimate the total compute. 
        \item The paper should disclose whether the full research project required more compute than the experiments reported in the paper (e.g., preliminary or failed experiments that didn't make it into the paper). 
    \end{itemize}

\item {\bf Code of ethics}
    \item[] Question: Does the research conducted in the paper conform, in every respect, with the NeurIPS Code of Ethics \url{https://neurips.cc/public/EthicsGuidelines}?
    \item[] Answer: \answerYes{}
    \item[] Justification: No human subjects, private data, or dual-use concerns beyond general LLM deployment efficiency.
    \item[] Guidelines:
    \begin{itemize}
        \item The answer \answerNA{} means that the authors have not reviewed the NeurIPS Code of Ethics.
        \item If the authors answer \answerNo, they should explain the special circumstances that require a deviation from the Code of Ethics.
        \item The authors should make sure to preserve anonymity (e.g., if there is a special consideration due to laws or regulations in their jurisdiction).
    \end{itemize}

\item {\bf Broader impacts}
    \item[] Question: Does the paper discuss both potential positive societal impacts and negative societal impacts of the work performed?
    \item[] Answer: \answerYes{}
    \item[] Justification: \cref{app:limitations} discusses positive impacts (reduced memory, energy efficiency) and acknowledges that efficiency gains could lower barriers to LLM misuse.
    \item[] Guidelines:
    \begin{itemize}
        \item The answer \answerNA{} means that there is no societal impact of the work performed.
        \item If the authors answer \answerNA{} or \answerNo, they should explain why their work has no societal impact or why the paper does not address societal impact.
        \item Examples of negative societal impacts include potential malicious or unintended uses (e.g., disinformation, generating fake profiles, surveillance), fairness considerations (e.g., deployment of technologies that could make decisions that unfairly impact specific groups), privacy considerations, and security considerations.
        \item The conference expects that many papers will be foundational research and not tied to particular applications, let alone deployments. However, if there is a direct path to any negative applications, the authors should point it out. For example, it is legitimate to point out that an improvement in the quality of generative models could be used to generate Deepfakes for disinformation. On the other hand, it is not needed to point out that a generic algorithm for optimizing neural networks could enable people to train models that generate Deepfakes faster.
        \item The authors should consider possible harms that could arise when the technology is being used as intended and functioning correctly, harms that could arise when the technology is being used as intended but gives incorrect results, and harms following from (intentional or unintentional) misuse of the technology.
        \item If there are negative societal impacts, the authors could also discuss possible mitigation strategies (e.g., gated release of models, providing defenses in addition to attacks, mechanisms for monitoring misuse, mechanisms to monitor how a system learns from feedback over time, improving the efficiency and accessibility of ML).
    \end{itemize}

\item {\bf Safeguards}
    \item[] Question: Does the paper describe safeguards that have been put in place for responsible release of data or models that have a high risk for misuse (e.g., pre-trained language models, image generators, or scraped datasets)?
    \item[] Answer: \answerNA{}
    \item[] Justification: This paper proposes a quantization method; no pre-trained models, datasets, or assets with misuse risk are released.
    \item[] Guidelines:
    \begin{itemize}
        \item The answer \answerNA{} means that the paper poses no such risks.
        \item Released models that have a high risk for misuse or dual-use should be released with necessary safeguards to allow for controlled use of the model, for example by requiring that users adhere to usage guidelines or restrictions to access the model or implementing safety filters. 
        \item Datasets that have been scraped from the Internet could pose safety risks. The authors should describe how they avoided releasing unsafe images.
        \item We recognize that providing effective safeguards is challenging, and many papers do not require this, but we encourage authors to take this into account and make a best faith effort.
    \end{itemize}

\item {\bf Licenses for existing assets}
    \item[] Question: Are the creators or original owners of assets (e.g., code, data, models), used in the paper, properly credited and are the license and terms of use explicitly mentioned and properly respected?
    \item[] Answer: \answerYes{}
    \item[] Justification: All models (Qwen3, Llama3) and datasets (WikiText-2) are cited. Models are used under their respective Apache 2.0 / Llama Community licenses.
    \item[] Guidelines:
    \begin{itemize}
        \item The answer \answerNA{} means that the paper does not use existing assets.
        \item The authors should cite the original paper that produced the code package or dataset.
        \item The authors should state which version of the asset is used and, if possible, include a URL.
        \item The name of the license (e.g., CC-BY 4.0) should be included for each asset.
        \item For scraped data from a particular source (e.g., website), the copyright and terms of service of that source should be provided.
        \item If assets are released, the license, copyright information, and terms of use in the package should be provided. For popular datasets, \url{paperswithcode.com/datasets} has curated licenses for some datasets. Their licensing guide can help determine the license of a dataset.
        \item For existing datasets that are re-packaged, both the original license and the license of the derived asset (if it has changed) should be provided.
        \item If this information is not available online, the authors are encouraged to reach out to the asset's creators.
    \end{itemize}

\item {\bf New assets}
    \item[] Question: Are new assets introduced in the paper well documented and is the documentation provided alongside the assets?
    \item[] Answer: \answerNA{}
    \item[] Justification: No new datasets or pre-trained models are released.
    \item[] Guidelines:
    \begin{itemize}
        \item The answer \answerNA{} means that the paper does not release new assets.
        \item Researchers should communicate the details of the dataset\slash code\slash model as part of their submissions via structured templates. This includes details about training, license, limitations, etc. 
        \item The paper should discuss whether and how consent was obtained from people whose asset is used.
        \item At submission time, remember to anonymize your assets (if applicable). You can either create an anonymized URL or include an anonymized zip file.
    \end{itemize}

\item {\bf Crowdsourcing and research with human subjects}
    \item[] Question: For crowdsourcing experiments and research with human subjects, does the paper include the full text of instructions given to participants and screenshots, if applicable, as well as details about compensation (if any)? 
    \item[] Answer: \answerNA{}
    \item[] Justification: This paper does not involve crowdsourcing or research with human subjects.
    \item[] Guidelines:
    \begin{itemize}
        \item The answer \answerNA{} means that the paper does not involve crowdsourcing nor research with human subjects.
        \item Including this information in the supplemental material is fine, but if the main contribution of the paper involves human subjects, then as much detail as possible should be included in the main paper. 
        \item According to the NeurIPS Code of Ethics, workers involved in data collection, curation, or other labor should be paid at least the minimum wage in the country of the data collector. 
    \end{itemize}

\item {\bf Institutional review board (IRB) approvals or equivalent for research with human subjects}
    \item[] Question: Does the paper describe potential risks incurred by study participants, whether such risks were disclosed to the subjects, and whether Institutional Review Board (IRB) approvals (or an equivalent approval/review based on the requirements of your country or institution) were obtained?
    \item[] Answer: \answerNA{}
    \item[] Justification: This paper does not involve research with human subjects.
    \item[] Guidelines:
    \begin{itemize}
        \item The answer \answerNA{} means that the paper does not involve crowdsourcing nor research with human subjects.
        \item Depending on the country in which research is conducted, IRB approval (or equivalent) may be required for any human subjects research. If you obtained IRB approval, you should clearly state this in the paper. 
        \item We recognize that the procedures for this may vary significantly between institutions and locations, and we expect authors to adhere to the NeurIPS Code of Ethics and the guidelines for their institution. 
        \item For initial submissions, do not include any information that would break anonymity (if applicable), such as the institution conducting the review.
    \end{itemize}

\item {\bf Declaration of LLM usage}
    \item[] Question: Does the paper describe the usage of LLMs if it is an important, original, or non-standard component of the core methods in this research? Note that if the LLM is used only for writing, editing, or formatting purposes and does \emph{not} impact the core methodology, scientific rigor, or originality of the research, declaration is not required.
    \item[] Answer: \answerNA{}
    \item[] Justification: LLMs are evaluation subjects only (Qwen3, Llama3 families), not part of the proposed methodology.
    \item[] Guidelines:
    \begin{itemize}
        \item The answer \answerNA{} means that the core method development in this research does not involve LLMs as any important, original, or non-standard components.
        \item Please refer to our LLM policy in the NeurIPS handbook for what should or should not be described.
    \end{itemize}

\end{enumerate}

\end{document}